\DeclareMathOperator*{\argmin}{argmin}
\algnewcommand{\comment}[1]{\Comment{{#1}}}
\theoremstyle{remark}
\newtheorem{theorem} {Theorem}
\newtheorem{lemma} {Lemma}
\def\x{{\mathbf{x}}}
\def\u{{\mathbf{u}}}
\def\v{{\mathbf{v}}}
\def\y{{\mathbf{y}}}
\newcommand{\mK}{\mathcal{K}}
\newcommand{\mS}{\mathcal{S}}
\newcommand{\ball}{\mathcal{B}}
\newcommand{\reals}{\mathbb{R}}
\begin{document}

%

%

\twocolumn[

\aistatstitle{Revisiting Projection-free Online Learning: the Strongly Convex Case}

\aistatsauthor{ Dan Garber \And Ben Kretzu }

\aistatsaddress{ Technion - Israel Institute of Technology \And  Technion - Israel Institute of Technology } ]

\begin{abstract}
Projection-free optimization algorithms, which are mostly based on the classical Frank-Wolfe method, have gained significant interest in the machine learning community in recent years due to their ability to handle convex constraints that are popular in many applications, but for which computing projections is often computationally impractical in high-dimensional settings, and hence prohibit the use of most standard projection-based methods. In particular, a significant research effort was put on projection-free methods for online learning. In this paper we revisit the Online Frank-Wolfe (OFW) method suggested by \cite{Hazan12} and fill a gap that has been left unnoticed for several years: OFW achieves a faster rate of $O(T^{2/3})$ on strongly convex functions (as opposed to the standard $O(T^{3/4})$ for convex but not strongly convex functions), where $T$ is the sequence length. This is somewhat surprising since it is known that for offline optimization, in general, strong convexity does not lead to faster rates for Frank-Wolfe. We also revisit the bandit setting under strong convexity and prove a similar bound of  $\tilde O(T^{2/3})$ (instead of $O(T^{3/4})$ without strong convexity). Hence, in the current state-of-affairs, the best projection-free upper-bounds for the full-information and bandit settings with strongly convex and nonsmooth functions match up to logarithmic factors in $T$. 
\end{abstract}

\section{INTRODUCTION}

Computing projections onto convex sets is a fundamental computational primitive in most popular optimization methods such as projected gradient methods which are at the heart of numerous machine learning tasks. However, many machine learning applications involve optimization with structural constraints for which computing projections (e.g. Euclidean projection) is impractical in high-dimensional settings. It is for this reason that so-called \textit{projection-free} optimization methods, which replace the potentially computationally-expensive projection primitive with a different more efficient primitive, have attracted signifiant interest within the machine learning community in recent years. These projection-free methods are mostly based on the classical Frank-Wolfe method for constrained convex optimization (aka the conditional gradient method) \cite{Jaggi13, FrankWolfe, Polyak} which replaces the projection operation with a linear optimization step over the constraints. Indeed in many important cases such as constraints arising from combinatorial structure (e.g.,  paths/matchings/spanning trees in combinatorial graphs, or matroids), or from a low-rank matrix structure, linear optimization over the feasible set can be carried out very efficiently, sometimes by orders of magnitude faster than the projection operation, see for instance discussions in \cite{Jaggi13, Hazan12}.
Developing efficient projection-free methods, and in particular Frank-Wolfe-based methods for various  optimization paradigms central  to machine learning as thus become a popular research area where efforts have been focused on traditional offline optimization \cite{GH16, lacoste2015linear_fw, GH15} , stochastic optimization \cite{hazan2016variance, goldfarb2017linear, garber2019fast}, online learning \cite{Hazan12, Garber13, Karbasi19, garber2020improved}, and distributed computation \cite{bellet2015distributed, wang2016parallel}.

In online learning \cite{Cesa06, HazanBook}, which deals with sequential prediction over a (large) number of rounds, projection-free methods are of particular interest since naturally in such a setting, the response time of the online algorithms, i.e., the time it takes to compute a new prediction on each iteration, is of major importance in many applications. 
\cite{Hazan12} introduced the Online Frank-Wolfe method (OFW) for online convex optimization and proved that it attains $O(T^{3/4})$ regret, where $T$ is the number of prediction rounds, for convex loss functions in the full-information setting (i.e., after each round the loss function is fully observable to the learner), using a single linear optimization step over the feasible set per iteration. This is contrast to the Online Gradient Descent method (OGD) which attaines $O(\sqrt{T})$ regret, however requires $T$ projection steps. For the special case in which the feasible set is a polytope, \cite{Garber13} gave a modified Online Frank-Wolfe method with regret $O(\sqrt{nT})$, where $n$ is the dimension. Very recently \cite{HazanM20} presented a randomized online algorithm that also uses only a single linear optimization step per iteration, however one that is not based on the Frank-Wolfe method. Their algorithm attains $(\sqrt{n}T^{2/3})$ expected regret under the additional assumption that the loss functions are also smooth. 

In the bandit setting, in which only the scalar loss incurred by the online learner is observed after each round and not the loss function, \cite{Karbasi19} gave the first projection-free algorithm, which combines OFW and the gradient estimation idea of \cite{Flaxman05}, to obtain an algorithm with $O(nT^{4/5})$ expected regret. This was very recently improved by \cite{garber2020improved} by taking advantage of the special structure of the gradient estimator of \cite{Flaxman05}  and by considering a similar algorithm to that of \cite{Karbasi19} with the difference of considering the prediction round in blocks in order to obtain higher accuracy in the computation of the iterates, which led to a $O(\sqrt{n}T^{3/4}$) expected regret bound using overall $O(T)$ linear optimization steps in expectation, which matches (in terms of $T$) the current best upper-bound for the full-information setting (without additional assumptions such as smoothness) of \cite{Hazan12}. We also note that, besides the theoretical contributions, both \cite{Hazan12} and \cite{Karbasi19} have demonstrated the practical appeal of online Frank-Wolfe methods via extensive empirical studies.

While \cite{HazanM20} have managed to obtain a faster rate (at least in $T$) by leveraging the \textit{smoothness} of the loss functions, a property that is well known to allow for faster rates in offline and even stochastic optimization but is much less useful in online optimization, \textit{strong convexity}, another property of loss functions that is also well known to allow faster rates in convex optimization paradigms and in particular in online convex optimization \cite{HazanKKA06}, has remained unexplored in the context of projection-free methods. This is perhaps not surprising, since as discussed, most projection-free algorithms are based on the Frank-Wolfe method, and it is well known that, in general, the convergence rate of Frank-Wolfe for offline convex optimization does not improve in the presence of strong convexity \cite{Jaggi13, GH16}\footnote{Obtaining faster rates for Frank-Wolfe variants under strong convexity-like properties is an active research effort in recent years, however it mostly requires focousing on specific types of feasible sets, e.g., polytopes \cite{GH16} or strongly convex sets \cite{GH15}, etc., while here we are mainly interested in generic methods that apply to arbitrary convex and compact sets.}. Nevertheless, in this work we show, to the best of our knowledge, for the first time, that both in the full-information and bandit online settings, strong convexity does in fact lead to faster rates for the OFW algorithm \cite{Hazan12} and its bandit variant with blocks \cite{garber2020improved}. In particular we prove a $O(T^{2/3})$ regret bound for the full-information setting and a $O((nT)^{2/3}(\ln{T})^{1/3} + T^{2/3}\ln{T})$ expected regret bound for the bandit setting.

\begin{table*}[htb]\renewcommand{\arraystretch}{1.4}
{\small
\begin{center}
\caption{Comparison of regret bounds for state-of-the-art linear optimization-based online algorithms.
  }
  \begin{tabular}{| c | c | c | c | c |} \hline
    REFERENCE  & FEEDBACK &  SMOOTHNESS? & \makecell{STRONG \\ CONVEXITY?} & REGRET \\ \hline
\cite{Hazan12} & Full & x & x  & $T^{3/4}$\\ \hline
\cite{garber2020improved} & Bandit & x & x   & $\sqrt{n}T^{3/4}$\\ \hline
\cite{HazanM20} & Full & \checkmark & x   & $\sqrt{n}T^{2/3}$\\ \hline
This work (Thm. \ref{thm:main_fi_sc}) & Full & x  & \checkmark  & $T^{2/3}$\\ \hline
This work (Thm. \ref{thm:bandit_main_sc}) & Bandit & x  & \checkmark & $(nT)^{2/3}(\ln{T})^{1/3} + T^{2/3}\ln{T}$\\ \hline
  \end{tabular}
  \label{table:compare}
\end{center}
}
\vskip -0.2in
\end{table*}\renewcommand{\arraystretch}{1}

Quite pleasingly, our results do not hinge on any new particular technique, but mostly adapt those of \cite{Hazan12} to leverage the strong convexity of the losses. With this respect, beyond formally deriving the improved rates, we view our contribution as mainly of conceptual importance: observing that, as opposed to the offline setting, in the online setting strong convexity is indeed beneficial for Frank-Wolfe-based methods. 

On a slightly more technical level, an intuition to why strong convexity helps is as follows: in OFW, Frank-Wolfe is used to iteratively approximate (using one linear optimization step per iteration) the steps of the regret-optimal Regularized Follow the Leader method (RFTL), which on each iteration seeks to minimize the aggregated loss up to the current round plus an additional strongly convex regularization term. The reason for the slow rate of OFW is that the RFTL objective (i.e., subproblem that needs to be solved each iteration) drifts too much from one iteration to the next which does not allow for a good enough approximation. To control this drift one must increase the regularization beyond the optimal level which in turn leads to sub-optimal regret of $O(T^{3/4})$. Nevertheless, in the strongly convex case, due to the strong convexity, it can be shown that the drift from one round to the next in the RFTL objective is milder, which in turn allows for better approximation via a single Frank-Wolfe step. The same intuition, coupled with the recent result of \cite{garber2020improved}, also leads to the improved rate in the bandit setting.


\section{PRELIMINARIES}
\subsection{Online Convex Optimization}

In the online convex optimization with full information setting \cite{HazanBook}, an online learner is required to iteratively choose actions from a fixed feasible set $\mK\subseteq\reals^n$ which is assumed convex and compact.  After choosing his action $\x_t\in\mK$ on round $t\in[T]$ ($T$ is assumed to be known beforehand), the learner observes a loss function $f_t(\cdot)$, which is chosen by an adversary, and incurs the loss $f_t(\x_t)$, where $f_t:\mK\rightarrow\reals$ is a convex. 

The goal of the learner is to minimize the regret which is given by
\begin{align}
        \mathcal{R}_{T} := \sum_{t=1}^{T} f_t(\x_{t}) - \min\limits_{\x \in \mK} \sum_{t=1}^{T} f_t(\x). \label{eq:regret_def}
\end{align}

In the bandit-information setting, instead of observing the loss function after each iteration, the learner only observes his loss, that is the scalar value $f_t(\x_t)$. In this setting, we assume the adversary is oblivious, i.e., the loss functions $f_1,\dots,f_T$ are chosen beforehand and do not depend on the actions of the learner.
The bandit-feedback requires the learner to use random exploration and therefore, the goal is to minimize the expected regret $\mathbb{E}[\mathcal{R}_{T}]$.


We also make the following standard boundness assumptions, 
$\forall t\in[T]~\forall\x\in\mK~\forall\mathbf{g}\in\partial{}f_t(\x)$: $\Vert f_t \Vert_\infty = \sup_{\x \in \mK} |f_t(\x)| \leq M$ and $\Vert{\mathbf{g}}\Vert_2\leq G$, for some $M,G>0$. The latter implies each $f_t(\cdot)$ is $G$-Lipschitz over $\mK$.

Following \cite{Flaxman05}, we also assume the feasible set $\mK$ is full dimensional, contains the origin, and that there exist scalars $r,R>0$ such that $r\ball^n\subseteq\mK\subseteq{}R\ball^n$, where $\ball^n$ denotes the unit Euclidean ball centered at the origin in $\reals^n$.

\subsection{Additional  Notation And Definitions}

We denote by $\mS^n$ the unit sphere in $\mathbb{R}^n$, and we write $\mathbf{u} \sim S^n$ and $\mathbf{u} \sim \ball^n$ to denote a random vector $\u$ sampled uniformly from $\mS^n$ and $\ball^n$, respectively. We denote by $\Vert \mathbf{x} \Vert$ the $\ell_2$ norm of the vector $\mathbf{x}$. 

Finally, for a compact and convex set $\mK\subset\reals^n$, which satisfies the above assumptions (i.e., $r\ball^n\subseteq\mK\subseteq{}R\ball^n$), and a scalar $0 < \delta \leq r$, we define the set $\mK_{\delta} := (1-\delta/r)\mK = \{(1-\delta/r)\x~|~\x\in\mK\}$. In particular, it holds that $\mK_{\delta}\subseteq\mK$ and for all $\x\in\mK_{\delta}$, $\x+\delta\ball^n\subseteq\mK$.

We recall that a function $f: \mK \xrightarrow{} \mathbb{R}$  is $\alpha$-strongly convex over $\mathcal{K}$ if $\forall \mathbf{x}, \mathbf{y} \in \mathcal{K}$: $f(\mathbf{y}) \geq  f(\mathbf{x}) + \nabla f(\mathbf{x})^{\top} (\mathbf{y} - \mathbf{x}) + \frac{\alpha}{2}  \Vert \mathbf{y} - \mathbf{x} \Vert ^2$.
Let $\mathbf{x}^*$ be the unique minimizer of $f$, an $\alpha$-strongly convex function over $\mathcal{K}$. From the above definition and the first order optimality condition it follows that $\forall \mathbf{x} \in \mathcal{K}$:
\begin{align}
     \frac{\alpha}{2}  \Vert \mathbf{x} - \mathbf{x}^* \Vert ^2 \leq f(\mathbf{x}) - f(\mathbf{x}^*). \label{eq:strong_convexity}
\end{align}



\subsection{Smoothed Loss Functions}\label{sec:smooth}
As in \cite{Flaxman05, Karbasi19, garber2020improved}, our bandit algorithm is based on a randomized-smoothing of the loss functions technique.
We define the $\delta$-smoothing of a loss function $f$ by { $\hat{f}_{\delta} (\mathbf{x}) = \mathbb{E}_{\mathbf{u} \sim \ball^n} \left[ f(\mathbf{x} + \delta \mathbf{u}) \right]$ }.
We now cite several useful lemmas regarding smoothed functions.

\begin{lemma}[Lemma 2.6 in \cite{HazanBook}] \label{lemma:hazan_smooth}
    Let $f: \mathbb{R}^n \xrightarrow{} \mathbb{R}$ be $\alpha$-strongly convex and $G$-Lipschitz over a convex and compact set $\mK\subset\reals^n$. Then $\hat{f}_{\delta}$ is $\alpha$-strongly convex and $G$-Lipschitz over $\mK_{\delta}$, and $\forall \mathbf{x} \in \mathcal{K}_{\delta}$ it holds that $|\hat{f}_{\delta} (\mathbf{x}) - f(\mathbf{x})| \leq \delta G$. 
\end{lemma}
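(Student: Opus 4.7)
The plan is to push each of the three properties of $f$ through the expectation defining $\hat{f}_\delta$. The key enabling observation, already recorded in the excerpt, is that for any $\x \in \mK_\delta$ and any $\u \in \ball^n$, the perturbed point $\x + \delta\u$ lies in $\mK$, so $f$ is well-defined and enjoys its assumed properties there. No deep step is involved; this is essentially an exercise in linearity of expectation and in Jensen's inequality applied to $|\cdot|$.

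For strong convexity, I would fix $\x,\y \in \mK_\delta$ and $\lambda\in[0,1]$, and apply the $\alpha$-strong convexity of $f$ pointwise in $\u$ to the pair $\x + \delta\u,\, \y + \delta\u \in \mK$. Since the convex combination $\lambda(\x+\delta\u) + (1-\lambda)(\y+\delta\u) = (\lambda\x + (1-\lambda)\y) + \delta\u$ has the same shift $\delta\u$, taking expectation over $\u\sim\ball^n$ preserves the $\frac{\alpha\lambda(1-\lambda)}{2}\|\x-\y\|^2$ term (it does not depend on $\u$) and yields exactly the definition of $\alpha$-strong convexity for $\hat{f}_\delta$ on $\mK_\delta$.

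For the Lipschitz bound, I would again fix $\x,\y \in \mK_\delta$, write
\[
|\hat{f}_\delta(\x) - \hat{f}_\delta(\y)| = \bigl|\E_{\u\sim\ball^n}[f(\x+\delta\u) - f(\y+\delta\u)]\bigr|,
\]
move the absolute value inside the expectation by Jensen, and bound the integrand pointwise using $G$-Lipschitzness of $f$ on $\mK$ (the shift $\delta\u$ cancels in the argument difference). For the approximation bound, the identical trick applied to $f(\x+\delta\u)-f(\x)$ gives the integrand bound $G\|\delta\u\| \leq G\delta$, which is preserved under expectation.

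The only subtlety worth flagging is making sure the domain of each inequality is correct: the strong convexity and Lipschitz inequalities for $\hat{f}_\delta$ only need to hold on $\mK_\delta$, which is exactly the set on which the shifted points $\x+\delta\u$ are guaranteed to lie in $\mK$ for every $\u \in \ball^n$. No iteration, algorithm, or deeper property of $\mK$ is required beyond the inclusion $\mK_\delta + \delta\ball^n \subseteq \mK$ recorded in the preliminaries, so I do not anticipate any real obstacle in the argument.
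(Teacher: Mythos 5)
Your proposal is correct. The paper does not prove this lemma at all---it imports it verbatim as Lemma 2.6 of \cite{HazanBook}---and your argument (applying the pointwise strong-convexity, Lipschitz, and $\delta G$ bounds at the shifted points $\x+\delta\u$ and passing them through the expectation via Jensen, justified by the inclusion $\mK_{\delta}+\delta\ball^n\subseteq\mK$ recorded in the preliminaries) is exactly the standard proof given in that reference; the only cosmetic difference is that you verify strong convexity via the convex-combination characterization rather than the gradient-inequality definition the paper states in its preliminaries, which is an equivalent formulation and conveniently sidesteps having to differentiate under the expectation.
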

\begin{lemma}[Lemma 6.5 in \cite{HazanBook}] \label{lemma:hazan_gradient}
	$\hat{f}_{\delta}(\x)$ is differentiable and
    $\nabla \hat{f}_{\delta}(\mathbf{x}) = \mathbb{E}_{\mathbf{u} \sim \mS^n} \left[ \frac{n}{\delta} f(\mathbf{x} + \delta \mathbf{u})\mathbf{u} \right]$.
\end{lemma}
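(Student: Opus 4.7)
The plan is to rewrite $\hat f_\delta$ as a volume-normalized integral over a ball, move the gradient inside the integral, and then invoke the divergence theorem to convert that volume integral of $\nabla f$ into a surface integral of $f$ over the sphere; the factor $n/\delta$ will emerge from the ratio of the sphere's surface area to the ball's volume.

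First I would write, using the change of variables $\w = \delta\v$,
\[
\hat f_\delta(\x) = \mathbb{E}_{\v \sim \ball^n}[f(\x+\delta\v)] = \frac{1}{\mathrm{vol}(\delta\ball^n)} \int_{\delta\ball^n} f(\x + \w)\, d\w.
\]
Since this is a convolution of $f$ with the uniform density on $\delta\ball^n$, the function $\hat f_\delta$ is differentiable for any locally integrable $f$ (which holds for the Lipschitz $f$ considered in the paper), settling the first assertion. Differentiating under the integral sign then gives
\[
\nabla \hat f_\delta(\x) = \frac{1}{\mathrm{vol}(\delta\ball^n)} \int_{\delta\ball^n} \nabla f(\x + \w)\, d\w.
\]

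Next I would apply the divergence theorem coordinate by coordinate to the ball $\delta\ball^n$, whose outward unit normal at a boundary point $\w \in \delta\mS^n$ is $\w/\delta$. This yields
\[
\int_{\delta\ball^n} \nabla f(\x+\w)\, d\w = \int_{\delta\mS^n} f(\x+\w)\,\frac{\w}{\delta}\, dS(\w).
\]
Rescaling to the unit sphere via $\w = \delta\u$ (so $dS(\w) = \delta^{n-1} dS(\u)$), and using that $\mathrm{vol}(\delta\ball^n) = c_n \delta^n$ while the surface area of $\mS^n$ equals $n c_n$, where $c_n := \mathrm{vol}(\ball^n)$, all constants collapse cleanly to give $\nabla \hat f_\delta(\x) = \tfrac{n}{\delta}\, \mathbb{E}_{\u \sim \mS^n}[f(\x+\delta\u)\, \u]$, as claimed.

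The one technical subtlety I expect to require care is justifying the divergence theorem when $f$ is only assumed Lipschitz (hence merely a.e.\ differentiable) rather than $C^1$. The standard workaround is to first mollify $f$ by a smooth compactly-supported kernel, establish the identity for the $C^\infty$ mollifications (where the divergence theorem applies directly), and then pass to the limit using dominated convergence together with the uniform Lipschitz bound; both sides of the identity depend continuously on $f$ in the topology of uniform convergence on compact sets, so the limit recovers the identity for the original $f$.
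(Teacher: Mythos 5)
Your argument is correct and follows essentially the same route as the source the paper relies on: the paper states this lemma without proof, citing Lemma 6.5 of \cite{HazanBook}, whose proof is precisely this Stokes'/divergence-theorem computation (rewrite $\hat{f}_{\delta}$ as a normalized ball integral, differentiate under the integral, convert $\int_{\delta\ball^n}\nabla f$ to a sphere integral with outward normal $\w/\delta$, and collapse the constants via $\mathrm{vol}(\delta\ball^n)=c_n\delta^n$ and surface area $nc_n\delta^{n-1}$), with your mollification step correctly supplying the rigor when $f$ is merely Lipschitz. One harmless overstatement worth fixing: convolution with the indicator of a ball does \emph{not} give differentiability for arbitrary locally integrable $f$ (in one dimension, a jump in $f$ at $x_0$ produces kinks in $\hat{f}_{\delta}$ at $x_0\pm\delta$); continuity (here, Lipschitzness) of $f$ is what makes the surface-integral formula for $\nabla\hat{f}_{\delta}$ continuous and hence $\hat{f}_{\delta}$ genuinely $C^1$, exactly as your parenthetical restriction already assumes.
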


\begin{lemma} [see \cite{Bertsekas73}]
\label{lemma:bertsekas_grdient}
    Let $f: \mathbb{R}^n \xrightarrow{} \mathbb{R}$ be convex and suppose that all subgradients of $f$ are upper-bounded by $G$ in $\ell_2$-norm over a convex and compact set $\mK\subset\reals^n$. Then, for any $\x\in\mK_{\delta}$ it holds that $\Vert{\nabla{}\hat{f}_{\delta}(\x)}\Vert \leq G$.

\end{lemma}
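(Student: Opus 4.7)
The plan is to reduce the gradient bound to the Lipschitz constant of $\hat{f}_\delta$ on $\mK_\delta$, which we can establish directly from the definition of $\hat{f}_\delta$ as an average of shifted copies of $f$.

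First I would exploit the defining property of $\mK_\delta$: for any $\x \in \mK_\delta$ we have $\x + \delta\u \in \mK$ for every $\u \in \ball^n$. Consequently, for any two points $\x, \y \in \mK_\delta$ and any $\u \in \ball^n$, both $\x + \delta\u$ and $\y + \delta\u$ lie in $\mK$, and by convexity the entire segment between them lies in $\mK$ as well. Since all subgradients of $f$ on $\mK$ are bounded by $G$ in norm, a standard mean-value type argument gives $|f(\x + \delta\u) - f(\y + \delta\u)| \le G\Vert\x - \y\Vert$. Averaging over $\u \sim \ball^n$ yields
\[
|\hat{f}_\delta(\x) - \hat{f}_\delta(\y)| \le \E_{\u \sim \ball^n}\bigl[|f(\x + \delta\u) - f(\y + \delta\u)|\bigr] \le G\Vert \x - \y\Vert,
\]
so $\hat{f}_\delta$ is $G$-Lipschitz on $\mK_\delta$.

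Next, combining Lemma \ref{lemma:hazan_gradient} (which says $\hat{f}_\delta$ is differentiable) with the Lipschitz bound, I would conclude the gradient bound at interior points of $\mK_\delta$ via directional derivatives: taking $\hat{\v} = \nabla\hat{f}_\delta(\x)/\Vert\nabla\hat{f}_\delta(\x)\Vert$ and $t>0$ small enough that $\x + t\hat{\v}$ remains in $\mK_\delta$, the Lipschitz bound gives
\[
\Vert\nabla\hat{f}_\delta(\x)\Vert = \lim_{t \to 0^+}\frac{\hat{f}_\delta(\x + t\hat{\v}) - \hat{f}_\delta(\x)}{t} \le G.
\]
For $\x$ on the boundary of $\mK_\delta$, I would extend the inequality by continuity of $\nabla\hat{f}_\delta$, which follows from the fact that $\hat{f}_\delta$ is convex (as an average of convex functions) and differentiable, so its gradient is continuous, together with the fact that boundary points of $\mK_\delta$ can be approached from the interior (since $\mK_\delta$ is a scaled convex body with nonempty interior).

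The only technical subtlety, and the place to be careful, is the Lipschitz estimate $|f(\x+\delta\u)-f(\y+\delta\u)| \le G\Vert\x-\y\Vert$ for nonsmooth convex $f$: it must be justified through the fact that the segment between $\x+\delta\u$ and $\y+\delta\u$ lies in $\mK$, so one can bound the function difference by integrating a subgradient along the segment (or equivalently via the fact that a bound of $G$ on subgradient norms throughout a convex region implies $G$-Lipschitzness on that region). Once this is in place, the rest of the argument is immediate.
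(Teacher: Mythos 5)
Your argument is sound in the main regime, but it takes a different route from the one the paper relies on: the paper gives no proof at all, deferring to \cite{Bertsekas73}, and the classical argument there (standard throughout the smoothing literature) is pointwise rather than via Lipschitzness. Since a finite convex $f:\reals^n\to\reals$ is differentiable Lebesgue-almost everywhere (Rademacher) and the law of $\x+\delta\u$ for $\u\sim\ball^n$ is absolutely continuous, one may exchange gradient and expectation to get $\nabla\hat{f}_{\delta}(\x)=\E_{\u\sim\ball^n}\left[\nabla f(\x+\delta\u)\right]$, and then Jensen's inequality gives $\Vert\nabla\hat{f}_{\delta}(\x)\Vert\leq\E_{\u\sim\ball^n}\left[\Vert\nabla f(\x+\delta\u)\Vert\right]\leq G$, using only that $\x+\delta\u\in\mK$ for $\x\in\mK_{\delta}$. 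Your route---average the Lipschitz bound over shifted copies of $f$ to get that $\hat{f}_{\delta}$ is $G$-Lipschitz on $\mK_{\delta}$, then convert to a gradient bound by directional derivatives at interior points and extend to the boundary by continuity of $\nabla\hat{f}_{\delta}$ (which indeed follows from convexity plus differentiability of $\hat{f}_{\delta}$ on an open set)---is a legitimate and more elementary alternative: it avoids the measure-theoretic interchange of $\nabla$ and $\E$. Two small simplifications are available to you: the estimate $|f(\x+\delta\u)-f(\y+\delta\u)|\leq G\Vert\x-\y\Vert$ needs no segment or integration argument, since for a globally defined convex $f$ the subgradient inequality applied at the two endpoints (both in $\mK$, where subgradients are bounded by $G$) already yields it; and the trade-off of your approach is precisely the interior/boundary case split, which the expectation-of-gradient argument avoids entirely.

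That case split also hides the one genuine hole in your proof: the lemma's hypothesis permits $\delta=r$, in which case $\mK_{\delta}=(1-\delta/r)\mK=\{0\}$ is a singleton with empty interior. Then there are no interior points at which to run the directional-derivative step, and the continuity extension is vacuous, so your argument establishes nothing at the single point $\x=0$; note you cannot repair this by enlarging $\mK_{\delta}$ to the erosion $\{\x:\x+\delta\ball^n\subseteq\mK\}$, since that set can likewise be a singleton (e.g., $\mK=r\ball^n$). The pointwise argument via $\nabla\hat{f}_{\delta}(\x)=\E_{\u\sim\ball^n}\left[\nabla f(\x+\delta\u)\right]$ covers this degenerate case uniformly, so you should either adopt it outright or invoke it for the boundary case $\delta=r$; for $\delta<r$ your proof is complete as written, since $r\ball^n\subseteq\mK$ guarantees $\mK_{\delta}$ has nonempty interior and equals the closure of that interior.
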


\section{FULL-INFORMATION SETTING}
In this section we present and prove our main result - an improved $O(T^{2/3})$ regret bound for Online Frank-Wolfe (Algorithm \ref{alg:wo_blocks} below), in case all loss functions are $\alpha$-strongly convex for some $\alpha > 0$.

\begin{algorithm}
  \KwData{$T$, $\alpha$, $T_0$}
  $\x_1 \gets $ arbitrary point in $\mK$\\
  \For{$~ t = 1,\ldots,T ~$}{
    Play $\x_{t} $ and observe $f_t(\x_{t})$\\
    Set $\nabla_{t} \xleftarrow{} \nabla 
    f_t(\x_{t})$ \\
    Define $F_{t+1}(\x):=$ \makecell{$ \sum_{i=1}^{t}  \x^{\top} \nabla_{i} + \frac{\alpha}{2}\Vert\x - \x_{i}\Vert^2$ \\$ + T_0 \frac{\alpha}{2} \Vert\x - \x_1\Vert^2 $}\\
    $ \mathbf{v}_t \in \argmin\limits_{\x \in \mK} \{ \nabla F_{t+1}(\x_t)^{\top} \cdot \x \} $\\
	$ \sigma_{t} = \argmin\limits_{\sigma \in [0, 1]}  \{ F_{t+1}( \x_t + \sigma (\mathbf{v}_t - \x_t ) ) \}$\comment{\makecell{ Line-\\search}}\\
	$ \x_{t+1} = \x_t + \sigma_{\tau} (\mathbf{v}_t - \x_t) $ 
 }
  \caption{Online Frank-Wolfe (see also \cite{Hazan12})}\label{alg:wo_blocks}
\end{algorithm}

\begin{theorem}  
\label{thm:main_fi_sc}
    Suppose all loss functions are $\alpha$-strongly convex for some $\alpha > 0$. Setting $T_0 = \max \Big{\{} 1, \frac{2 (G+2R\alpha) \sqrt{b}}{\alpha R^2} \Big{\}}$ with $b = \max \Big{\{}  \left( \frac{G+2R\alpha}{\alpha} \right)^2,  \frac{8 (2R)^2 (G+2R\alpha)}{\alpha} \Big{\}}$ in Algorithm \ref{alg:wo_blocks},  guarantees that the regret is upper-bounded by
{
    \begin{align*}
        \mathcal{R}_{T} & \leq   4 \frac{ \left(  G + 2  R \alpha \right)^2}{\alpha} \ln{(T)} +  2 \alpha R^2 \\
        & + 10  \frac{(G+2R\alpha)^2}{\alpha}  + 16 (G+2R\alpha)^\frac{4}{3}  \frac{  R^\frac{2}{3}}{\alpha^\frac{1}{3}} \nonumber \\
        & +  4  G \left( \frac{G+2R\alpha}{\alpha} \right) T^\frac{2}{3} + 8 \sqrt{2}  \frac{  (G+2R\alpha)^\frac{1}{3} R^\frac{2}{3}}{\alpha^\frac{1}{3}} T^\frac{2}{3},
    \end{align*}}
    and the overall number of calls to the linear optimization oracle is $T$ (one per iteration).
\end{theorem}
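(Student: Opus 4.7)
The plan is to introduce the virtual sequence $\x_t^* := \argmin_{\x \in \mK} F_t(\x)$ of \emph{exact} RFTL iterates and decompose the regret as
\begin{align*}
\mathcal{R}_T = \underbrace{\sum_{t=1}^T [f_t(\x_t^*) - f_t(\x^*)]}_{\text{(I) RFTL regret}} + \underbrace{\sum_{t=1}^T [f_t(\x_t) - f_t(\x_t^*)]}_{\text{(II) approximation error}}.
\end{align*}
By $G$-Lipschitzness of each $f_t$, term~(II) is bounded by $G\sum_t\|\x_t-\x_t^*\|$. The key structural fact I will exploit is that $F_t$ has Hessian $\alpha_t\I$ with $\alpha_t := (t-1+T_0)\alpha$, so it is simultaneously $\alpha_t$-strongly convex and $\alpha_t$-smooth, on a set of diameter $\le 2R$.

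For term~(I), I will use the standard ``be-the-leader'' argument for RFTL with $\alpha$-strongly convex losses, reducing it to a sum of stability terms $\sum_t G\|\x_t^*-\x_{t+1}^*\|$ plus the initial regularization cost $O(T_0\alpha R^2)$. The stability term is controlled by the drift bound
\begin{align*}
\|\x_t^* - \x_{t+1}^*\| \le \frac{2(G+2R\alpha)}{(t+T_0)\alpha},
\end{align*}
which follows by pairing the strong-convexity lower bound $\tfrac{\alpha_{t+1}}{2}\|\x_t^*-\x_{t+1}^*\|^2 \le F_{t+1}(\x_t^*) - F_{t+1}(\x_{t+1}^*)$ with the telescoping identity $F_{t+1}-F_t = \nabla_t^\top\x + \tfrac{\alpha}{2}\|\x-\x_t\|^2$, a $(G+2R\alpha)$-Lipschitz function on $\mK$, and the inequality $F_t(\x_t^*)\le F_t(\x_{t+1}^*)$. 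Summing $G\|\x_t^*-\x_{t+1}^*\|$ yields an $O((G{+}2R\alpha)^2/\alpha\cdot\ln T)$ contribution, matching the leading $\ln T$ term in the statement.

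For term~(II), let $h_t := F_t(\x_t) - F_t(\x_t^*) \ge \tfrac{\alpha_t}{2}\|\x_t-\x_t^*\|^2$ measure the approximation quality. I will inductively prove $h_t \le C\,t^{1/3}$ for a constant $C$ of order $R^{4/3}(G{+}2R\alpha)^{2/3}\alpha^{1/3}$; then $\|\x_t-\x_t^*\| \le \sqrt{2h_t/\alpha_t} = O(t^{-1/3})$ and $\sum_t G\|\x_t-\x_t^*\| = O(G\sqrt{C/\alpha}\cdot T^{2/3})$. The recursion on $h_t$ comes from writing, via smoothness of $F_{t+1}$ combined with the line-search step,
\begin{align*}
h_{t+1} \le \tilde h_t - \sigma g_t + \tfrac{1}{2}\sigma^2\alpha_{t+1}(2R)^2 \quad \forall \sigma\in[0,1],
\end{align*}
where $\tilde h_t := F_{t+1}(\x_t)-F_{t+1}(\x_{t+1}^*)$ and the Frank--Wolfe gap satisfies $g_t \ge \tilde h_t$; optimizing $\sigma$ yields the quadratic Frank--Wolfe progress $h_{t+1} \le \tilde h_t - \tilde h_t^2/(8R^2\alpha_{t+1})$. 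In parallel, $\tilde h_t$ is bounded by telescoping $F_{t+1}-F_t$ at $\x_t$ and $\x_t^*$ (noting that the $\tfrac{\alpha}{2}\|\x_t-\x_t\|^2$ term vanishes at the current iterate) and then applying the drift bound with smoothness of $F_{t+1}$ at its minimizer:
\begin{align*}
\tilde h_t \le h_t + G\sqrt{2h_t/\alpha_t} + \frac{2(G+2R\alpha)^2}{\alpha_{t+1}}.
\end{align*}

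The main obstacle is closing the induction $h_t\le C t^{1/3}$. Substituting the hypothesis into the recursion, the quadratic Frank--Wolfe progress $\tilde h_t^2/(8R^2\alpha_{t+1})\sim C^2/(8R^2\alpha)\cdot t^{-1/3}$ must dominate the drift-in term $G\sqrt{2h_t/\alpha_t}\sim G\sqrt{2C/\alpha}\cdot t^{-1/3}$, which requires $C^{3/2}\gtrsim R^2(G{+}2R\alpha)\sqrt\alpha$ --- exactly the scaling that the theorem encodes via its constants $b$ and $T_0$. The oversized $T_0$ also plays two auxiliary roles that make the induction go through for \emph{all} $t\ge 1$: it yields the trivial base case $h_1=0$ (since $\x_1^*=\x_1$), and it keeps the optimal Frank--Wolfe step size in $[0,1]$ throughout, so the $\sigma^*\le 1$ regime of the recursion always applies. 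Combining the $O(T^{2/3})$ bound on (II) with the $O(\ln T)$ bound on (I), plus lower-order constants from the initial regularizer and the early-iteration contributions of the recursion, gives the stated regret bound; the $T$ linear-optimization calls are immediate from the algorithm.
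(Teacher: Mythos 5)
Your plan is, in substance, the paper's own proof: the same auxiliary sequence $\x_t^*=\argmin_{\x\in\mK}F_t(\x)$, the same drift bound $\Vert\x_t^*-\x_{t+1}^*\Vert\le 2(G+2R\alpha)/((t+T_0)\alpha)$ obtained by pairing the $(t+T_0)\alpha$-strong convexity of $F_{t+1}$ with the $(G+2R\alpha)$-Lipschitz increment $g_t$ and $F_t(\x_t^*)\le F_t(\x_{t+1}^*)$, the same FTL--BTL stability argument for the $O(\ln T)$ part, and the same induction $h_t\le Ct^{1/3}$ with exactly the paper's dominant balance $C^{3/2}\gtrsim R^2(G+2R\alpha)\sqrt{\alpha}$ (this is the paper's condition on $b$, with $C=b\alpha$). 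Your function-value decomposition into (I)$+$(II) is only cosmetically different from the paper's linearized split; note that converting $f_t(\x_t^*)-f_t(\x^*)$ into $g_t(\x_t^*)-g_t(\x^*)$ costs an extra $O(G\Vert\x_t-\x_t^*\Vert)$ per round, since the strong-convexity surrogate inequality holds at the anchor $\x_t$, not at $\x_t^*$ --- harmless, as it is of the same order as term (II). Your coefficient $G$ (rather than the paper's $G+2R\alpha$) on the $\sqrt{h_t}$ drift-in term, exploiting that the quadratic in $g_t$ is anchored at $\x_t$, is in fact a slight sharpening.

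Two steps need repair. First, the unconditional quadratic progress $h_{t+1}\le\tilde h_t-\tilde h_t^2/(8R^2\alpha_{t+1})$ is false when the Frank--Wolfe gap $g_t$ exceeds $\alpha_{t+1}(2R)^2=:c$: the constrained optimum is then $\sigma=1$ with decrease $g_t-c/2$, which can be far smaller than $\tilde h_t^2/(2c)$ (take $\tilde h_t=g_t=10c$: decrease $9.5c$ versus required $50c$). Your patch --- that large $T_0$ keeps $\sigma^*\le 1$ --- cannot be established: $\Vert\nabla F_{t+1}(\x_t)\Vert$ can be of order $t(G+2R\alpha)+2RT_0\alpha$, so the gap may exceed $\alpha(t+T_0)(2R)^2$ for any $T_0$. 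The correct treatment (the paper's) keeps the minimum of the two branches, $\min\{2\alpha(t+T_0)R^2,\ \epsilon_{t+1}^2/(8R^2\alpha_{t+1})\}$, and the condition $T_0\ge 2(G+2R\alpha)\sqrt{b}/(\alpha R^2)$ is precisely what makes the \emph{capped} branch's decrease $2\alpha(t+T_0)R^2$ dominate the drift-in term $4(G+2R\alpha)\sqrt{b}/(t-1+T_0)^{1/3}$; with this, your induction closes with the same constants, but $T_0$'s role must be stated this way. Second, ``smoothness of $F_{t+1}$ at its minimizer'' gives the wrong-direction inequality under constraints: $F_{t+1}$ is exactly quadratic, so first-order optimality yields $F_{t+1}(\x_t^*)-F_{t+1}(\x_{t+1}^*)\ge\frac{\alpha_{t+1}}{2}\Vert\x_t^*-\x_{t+1}^*\Vert^2$, a \emph{lower} bound. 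The claimed term $2(G+2R\alpha)^2/\alpha_{t+1}$ is nevertheless correct; it should be derived from $F_{t+1}(\x_t^*)-F_{t+1}(\x_{t+1}^*)\le g_t(\x_t^*)-g_t(\x_{t+1}^*)\le(G+2R\alpha)\Vert\x_t^*-\x_{t+1}^*\Vert$ together with your drift bound --- ingredients you already deployed elsewhere.
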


\subsection{Proof of Theorem \ref{thm:main_fi_sc}} 
For the regret analysis we need a well known lemma known as the "Follow-the-leader-Be-the-leader"  lemma, which we state here in a slightly modified version. A proof is given in the appendix for completeness.
\begin{lemma}\label{lemma:FTL_BTL}
Let $\mK \subseteq \reals^d$ a convex and compact set, { $\{ g_m (\x)  \}_{m=1}^{T}$} a set of convex functions, $\x_1 \in \reals^d$ and $c_1 \in \reals^+$. Denote {\small $\x_{\tau}^* = \argmin\limits_{\x \in \mK} \Big{\{}  \sum_{m=1}^{\tau-1} g_m (\x) +  c_1 \Vert \x - \x_1 \Vert^2 \Big{\}} $}  for every {$\tau \in [1, T+1]$}. Then for every $\x \in \mK$ we have that 
{ \begin{align*}
    \sum_{m=1}^{T} \big( g_m( \x_{m}^*) - g_m( \x) \big) \leq & \sum_{m=1}^{T} \big(g_m( \x_{m}^*) - g_m( \x_{m+1}^*) \big) \\
    & + c_1 \Vert \x - \x_1 \Vert^2.
\end{align*}}
\end{lemma}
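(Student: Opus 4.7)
The statement is the classical Follow-the-Leader Be-the-Leader inequality, augmented by a quadratic prior $c_1\|\x-\x_1\|^2$. My plan is to absorb the regularizer into the loss sequence by defining $g_0(\x) := c_1\|\x-\x_1\|^2$, which is convex on $\mK$. With this convention, the iterates in the lemma can be rewritten uniformly as $\x_\tau^* \in \argmin_{\x\in\mK}\sum_{m=0}^{\tau-1} g_m(\x)$ for every $\tau \in [1,T+1]$; in particular $\x_1^*$ is the minimizer of $g_0$ alone over $\mK$, and $\x_{T+1}^*$ minimizes $\sum_{m=0}^T g_m$ over $\mK$. This reduction is exactly what lets us appeal to the unregularized FTL-BTL argument.

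Next, I would establish the regularization-free inequality $\sum_{m=0}^T g_m(\x_{m+1}^*) \leq \sum_{m=0}^T g_m(\x)$ for every $\x \in \mK$, by induction on $T$. The base case $T=0$ reduces to $g_0(\x_1^*) \leq g_0(\x)$, which is immediate from the definition of $\x_1^*$. For the inductive step from $T-1$ to $T$, I apply the inductive hypothesis at the specific point $\x = \x_{T+1}^*$, add $g_T(\x_{T+1}^*)$ to both sides to get $\sum_{m=0}^T g_m(\x_{m+1}^*) \leq \sum_{m=0}^T g_m(\x_{T+1}^*)$, and then bound the right-hand side by $\sum_{m=0}^T g_m(\x)$ for arbitrary $\x \in \mK$ using the optimality of $\x_{T+1}^*$ for $\sum_{m=0}^T g_m$ over $\mK$.

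To finish, I would separate off the $m=0$ term to rewrite the inequality as $\sum_{m=1}^T g_m(\x_{m+1}^*) \leq g_0(\x) - g_0(\x_1^*) + \sum_{m=1}^T g_m(\x)$. Since $g_0(\x_1^*) = c_1\|\x_1^*-\x_1\|^2 \geq 0$ and $g_0(\x) = c_1\|\x-\x_1\|^2$, dropping the nonnegative term only weakens the bound, and adding $\sum_{m=1}^T g_m(\x_m^*)$ to both sides produces exactly the claimed inequality. There is no substantive obstacle beyond keeping track of the off-by-one indexing between the augmented sequence $(g_0,\ldots,g_T)$ and the original sequence $(g_1,\ldots,g_T)$; the whole argument is essentially a one-line modification of the standard FTL-BTL proof, with the observation that convexity of $g_0$ (which holds since $c_1 \geq 0$) is the only property of the regularizer used.
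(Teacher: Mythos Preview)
Your proposal is correct and follows essentially the same approach as the paper: define $g_0(\x)=c_1\Vert\x-\x_1\Vert^2$, prove $\sum_{m=0}^{\tau} g_m(\x_{m+1}^*)\le \sum_{m=0}^{\tau} g_m(\x)$ by induction, and then peel off the $m=0$ term. The only cosmetic difference is that the paper uses $\x_1^*=\x_1$ (hence $g_0(\x_1^*)=0$) in the base case and the final step, whereas you use the slightly weaker but sufficient facts $g_0(\x_1^*)\le g_0(\x)$ and $g_0(\x_1^*)\ge 0$; your version thus does not rely on $\x_1\in\mK$, which the lemma statement does not actually assume.
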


For the purpose of the analysis of Algorithm \ref{alg:wo_blocks}, we define the auxiliary sequence $\{\x^*_t\}_{t=1}^{T}$ as $\x_t^* = \arg\min_{\x\in\mK}F_t(\x)$, where $F_t(\cdot)$ is as defined in Algorithm \ref{alg:wo_blocks}. Note $\x_t^*$ is simply the point played by the well-known (exact) Regularized Follow the Leader (RFTL) method \cite{HazanBook}. 

The following lemma upper-bounds the regret of Algorithm  \ref{alg:wo_blocks} in terms of how well does the iterate $\x_t$, which is obtained by applying a single Frank-Wolfe step to the RFTL objective $F_t(\cdot)$, approximates the optimal value of the RFTL objective, whose minimizer is the RFTL iterate $\x_t^*$. We recall this sequence of approximation errors is captured by the sequence $\{\epsilon_t\}_{t\geq 1}$. 

\begin{lemma} \label{lemma:full_regret}
Let $\{\epsilon_{t}\}_{t=1}^{T} > 0$ and $T_0 \geq 1$. Suppose that throughout the run of Algorithm \ref{alg:wo_blocks}, for all time steps $t = 1, \dots, T$ it holds that $F_{t}(\x_{t}) -  F_{t}(\x_{t}^*) \leq \epsilon_{t}$. Then,
the regret of Algorithm \ref{alg:wo_blocks} is upper-bounded by
{
\begin{align*}
    \mathcal{R}_{T} \leq &  \frac{2 \left(  G + 2  R \alpha \right)^2}{\alpha} (1 + \ln{(T)}) +  2 \alpha R^2 T_0  \\
    & +  G \sum_{t=1}^{T} \sqrt{\frac{2 \epsilon_{t} }{\alpha(t-1+T_0)}}.
\end{align*}}
\end{lemma}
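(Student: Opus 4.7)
The plan is to follow the standard RFTL regret decomposition strategy, but crucially exploiting the $\alpha$-strong convexity of each $f_t$ to replace them with ``linearized plus quadratic'' surrogates $g_t(\x) := \x^\top\nabla_t + \frac{\alpha}{2}\|\x-\x_t\|^2$. Observe that by strong convexity, for any $\x \in \mK$ we have
\begin{equation*}
f_t(\x_t)-f_t(\x) \le \nabla_t^\top(\x_t-\x) - \tfrac{\alpha}{2}\|\x-\x_t\|^2 = g_t(\x_t)-g_t(\x),
\end{equation*}
so it suffices to bound $\sum_{t=1}^T [g_t(\x_t)-g_t(\x^\star)]$ for the comparator $\x^\star$. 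Note that with this definition, $F_{t+1}(\x) = \sum_{i=1}^t g_i(\x) + T_0\tfrac{\alpha}{2}\|\x-\x_1\|^2$, which is exactly the form to which Lemma \ref{lemma:FTL_BTL} applies with $c_1 = T_0\alpha/2$.

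First I would split the surrogate regret into the ``RFTL regret'' $\sum_t [g_t(\x_t^*)-g_t(\x^\star)]$ and the ``approximation error'' $\sum_t [g_t(\x_t)-g_t(\x_t^*)]$. For the first piece, Lemma \ref{lemma:FTL_BTL} bounds it by $\sum_t [g_t(\x_t^*)-g_t(\x_{t+1}^*)] + \tfrac{T_0\alpha}{2}\|\x^\star-\x_1\|^2$; the regularization term is at most $2\alpha R^2 T_0$ since $\mK\subseteq R\ball^n$. For the drift $g_t(\x_t^*)-g_t(\x_{t+1}^*)$, the key observation is that $g_t$ is $(G+2R\alpha)$-Lipschitz on $\mK$ (its gradient $\nabla_t + \alpha(\x-\x_t)$ has norm at most $G+2R\alpha$), so
\begin{equation*}
g_t(\x_t^*)-g_t(\x_{t+1}^*) \le (G+2R\alpha)\,\|\x_t^*-\x_{t+1}^*\|.
\end{equation*}

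Here is where strong convexity pays off. Since $F_{t+1}=F_t+g_t$ is $(t+T_0)\alpha$-strongly convex and $\x_{t+1}^*$ is its minimizer,
\begin{equation*}
\tfrac{(t+T_0)\alpha}{2}\|\x_t^*-\x_{t+1}^*\|^2 \le F_{t+1}(\x_t^*)-F_{t+1}(\x_{t+1}^*) \le g_t(\x_t^*)-g_t(\x_{t+1}^*),
\end{equation*}
using $F_t(\x_t^*)\le F_t(\x_{t+1}^*)$. Combining the two inequalities gives $\|\x_t^*-\x_{t+1}^*\| \le 2(G+2R\alpha)/((t+T_0)\alpha)$, whence the per-round drift is $\le 2(G+2R\alpha)^2/((t+T_0)\alpha)$. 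Summing using $\sum_{t=1}^T 1/(t+T_0)\le 1+\ln T$ yields $\tfrac{2(G+2R\alpha)^2}{\alpha}(1+\ln T)$, matching the first term in the claimed bound.

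For the approximation error, use that $F_t$ is $((t-1)+T_0)\alpha$-strongly convex with minimizer $\x_t^*$, so the hypothesis $F_t(\x_t)-F_t(\x_t^*) \le \epsilon_t$ implies $\|\x_t-\x_t^*\|\le\sqrt{2\epsilon_t/((t-1+T_0)\alpha)}$. Then simply bound $g_t(\x_t)-g_t(\x_t^*) \le \nabla_t^\top(\x_t-\x_t^*) \le G\|\x_t-\x_t^*\|$ (dropping the nonnegative $\tfrac{\alpha}{2}\|\x_t-\x_t^*\|^2$ term), which gives the third contribution $G\sum_t\sqrt{2\epsilon_t/((t-1+T_0)\alpha)}$. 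Adding the three pieces yields exactly the stated regret bound. The only substantive step is the drift argument above; the rest is bookkeeping with Cauchy--Schwarz and Lipschitz bounds. The main obstacle to watch is making sure the strong-convexity constants of $F_t$ and $F_{t+1}$ are tracked correctly (since the sum runs to $t$ rather than $t-1$ and the $T_0$ offset must be included), because getting this wrong only affects log factors but would break the clean $(1+\ln T)$ form of the drift bound.
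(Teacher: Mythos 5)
Your proof is correct and follows essentially the same route as the paper's: the same surrogate functions $g_t(\x)=\x^\top\nabla_t+\frac{\alpha}{2}\Vert\x-\x_t\Vert^2$, the same application of Lemma \ref{lemma:FTL_BTL} with $c_1=T_0\alpha/2$, the same strong-convexity drift bound $\Vert\x_t^*-\x_{t+1}^*\Vert\le 2(G+2R\alpha)/((t+T_0)\alpha)$ (via $F_t(\x_t^*)\le F_t(\x_{t+1}^*)$), and the same $\sqrt{2\epsilon_t/(\alpha(t-1+T_0))}$ bound on $\Vert\x_t-\x_t^*\Vert$. The only difference is cosmetic: you pass to the surrogate regret $\sum_t\bigl(g_t(\x_t)-g_t(\x^\star)\bigr)$ up front and then split, whereas the paper splits the linearized regret first and later absorbs the nonnegative quadratic $\frac{\alpha}{2}\Vert\x_t^*-\x_t\Vert^2$ into $g_t(\x_t^*)-g_t(\x^\star)$ --- algebraically the same argument.
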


\begin{proof}
Using the definition of the regret, Eq.\eqref{eq:regret_def}, and the $\alpha$-strongly convexity of each $f_{t}(\x)$, we have that
{
\begin{align}
    & \mathcal{R}_{T}  \leq \sum_{t=1}^{T} \nabla_t^\top \left( \x_{t} - \x^* \right) - \frac{\alpha}{2} \Vert \x_{t} - \x^* \Vert^2 \nonumber \\ 
    & ~~ = \sum_{t=1}^{T} \nabla_{t}^{\top} \left( \x_{t} - \x_{t}^* +  \x_{t}^* - \x^* \right) - \frac{\alpha}{2} \Vert \x_{t} - \x^* \Vert^2.  \label{eq:full_regret_rftl_lemma_sc}
\end{align}}

Using Cauchy-Schwarz inequality, $\Vert  \nabla_{t} \Vert \leq G$, the fact that $F_{t}(\x)$ is $(t-1+T_0)\alpha$-strongly convex, Eq.\eqref{eq:strong_convexity}, and the assumption that $F_{t}(\x_{t}) - F_{t}(\x_{t}^*) \leq \epsilon_t$, we have 
{
\begin{align}
    \sum_{t=1}^{T}  \nabla_{t}^{\top} (\x_{t} - \x_{t}^*) & \leq  G \sum_{t=1}^{T} \Vert  \x_{t} - \x_{t}^* \Vert \nonumber \\
    & \leq G \sum_{t=1}^{T} \sqrt{\frac{2 \left( F_{t}(\x_{t}) - F_{t}(\x_{t}^*) \right) }{\alpha(t-1+T_0)}}   \nonumber  \\
    & \leq  G \sum_{t=1}^{T} \sqrt{\frac{2 \epsilon_{t} }{\alpha(t-1+T_0)}}. \label{eq:regret_rftl_lemma_first_term_sc}
\end{align}}

Now, we need to obtain an upper bound on $\nabla_{t}^{\top} \left( \x_{t}^* - \x^* \right) - \frac{\alpha}{2} \Vert \x_{t} - \x^* \Vert^2 $. We will start with a few preliminary steps. \newline
For all $\x, \y, \mathbf{z} \in \mK$, the following holds
{ \begin{align}
    \Vert \x - \mathbf{z} \Vert^2 - \Vert \y - \mathbf{z} \Vert^2 & \leq  \Vert\x + \y - 2  \mathbf{z} \Vert ~ \Vert \x - \y\Vert \nonumber \\
     & \leq 4R \Vert \x - \y \Vert. \label{eq:gap_of_square_norm}
\end{align}}

Define $g_t(\x) =  \x^\top \nabla_{t} + \frac{\alpha}{2} \Vert \x - \x_{t} \Vert^2$, using Cauchy-Schwarz inequality and Eq.\eqref{eq:gap_of_square_norm}, we have for all $ \x,\y \in \mK$,
{ \begin{align}
     g_t (\x) - g_t(\y) & \leq  \left( G + 2 R \alpha \right)  \Vert \x - \y \Vert . \label{eq:Lip_g_tilde_sc}
\end{align} }

Since for every $t$, $F_{t+1}(\x)$ is $(t+T_0)\alpha$-strongly convex,  using Eq. \eqref{eq:strong_convexity},  we have that
{ \begin{align*}
    \frac{(t + T_0)\alpha}{2}  & \Vert \x_{t}^* -  \x_{t+1}^*\Vert ^2 
    \leq   F_{t+1}  ( \x_{t}^*  ) - F_{t+1}(\x_{t+1}^*) \nonumber  \\
    & ~~~~ =  F_{t} ( \x_{t}^*  ) -  F_{t}(\x_{t+1}^*) + g_t( \x_{t}^* ) - g_t( \x_{t+1}^* ) \nonumber \\
    & ~~~~ \leq  g_t( \x_{t}^* )  -  g_t( \x_{t+1}^* ) . 
\end{align*}}

The last inequality is since fot every $t$, $F_{t}(\x_{t}^*) \leq F_{t}(\x_{t+1}^*)$.
From the two last equations, we obtain
{
\begin{align}
    \Vert \x_{t}^* - \x_{t+1}^* \Vert & \leq \frac{2 \left( G + 2 R \alpha \right) }{(t + T_0)\alpha} . \label{eq:absolut_dist_sc}
\end{align}}

From Eq. \eqref{eq:Lip_g_tilde_sc} and Eq. \eqref{eq:absolut_dist_sc}, we have that
{
\begin{align}
    \sum_{t=1}^{T} g_t(\x_{t}^*)  - g_t  (\x_{t+1}^*) & \leq \sum_{t=1}^{T} \frac{2  \left( G + 2 R \alpha \right)^2 }{(t + T_0)\alpha} \nonumber \\ 
    \leq & \frac{2  \left(  G + 2  R \alpha \right)^2}{\alpha}  (1 + \ln{(T)}) \label{eq:sum_g_minimizers_sc} .
\end{align}}

The last inequality is since $ \sum_{t=1}^{T} \frac{1 }{t + T_0}  \leq 1 + \ln{(T)}$. 
Using Lemma \ref{lemma:FTL_BTL}, for all $\x \in \mK$, we have that
{ \begin{align}
    \sum_{t=1}^{T} g_t(\x_{t}^*) - \sum_{t=1}^{T} g_t(\x) \leq & \sum_{t=1}^{T} g_t(\x_{t}^*) - g_t(\x_{t+1}^*) \nonumber \\
    & ~~ + \frac{T_0 \alpha}{ 2 } \Vert \x - \x_1 \Vert^2 \label{eq:ftl_btl}
\end{align}}

By definition of $g_t(\x)$, we have
{
\begin{align*}
    & \sum_{t=1}^{T}  \nabla_{t}^{\top} (\x_{t}^* - \x^*) - \frac{\alpha}{2} \Vert \x_{t} - \x^* \Vert^2 \leq  \\
    &  \leq \sum_{t=1}^{T} \nabla_{t}^{\top} \x_{t}^* + \frac{\alpha}{2} \Vert  \x_{t}^* - \x_{t} \Vert^2 - \nabla_{t}^{\top} \x^* - \frac{\alpha}{2}   \Vert  \x^* - \x_{t} \Vert^2  \\
    &  = \sum_{t=1}^{T} g_t(\x_{t}^*) - g_t(\x^*). 
\end{align*} }
Thus, applying Eq.\eqref{eq:ftl_btl} and Eq.\eqref{eq:sum_g_minimizers_sc}, we obtain that
{
\begin{align*}
    \sum_{t=1}^{T}  \nabla_{t}^{\top}  (\x_{t}^*  - & \x^*) - \frac{\alpha}{2} \Vert \x_{t} - \x^* \Vert^2 \leq \nonumber \\
    & \leq  \sum_{t=1}^{T} g_t(\x_{t}^*) - g_t(\x_{t+1}^*) + \frac{T_0 \alpha}{ 2 } \Vert \x^* - \x_1 \Vert^2 \nonumber \\
    & \leq \frac{2 \left(  G + 2  R \alpha \right)^2}{\alpha} (1 + \ln{(T)}) +  2 \alpha R^2 T_0. 
\end{align*}}

Plugging-in the above bound together with Eq.\eqref{eq:regret_rftl_lemma_first_term_sc} in Eq.\eqref{eq:full_regret_rftl_lemma_sc}, we obtain the result of the lemma:
{
\begin{align*}
    \mathcal{R}_{T}  \leq & \frac{2 \left(  G + 2  R \alpha \right)^2}{\alpha} (1 + \ln{(T)}) +  2 \alpha R^2 T_0 \nonumber \\
    & +  G \sum_{t=1}^{T} \sqrt{\frac{2 \epsilon_{t} }{\alpha(t-1+T_0)}}.
\end{align*}}
\end{proof}

In order to complement Lemma \ref{lemma:full_regret}, the following lemma sets an upper-bound on the sequence of approximation errors $\{\epsilon_t\}_{t\geq 1}$ which can be guaranteed throughout the run of Algorithm \ref{alg:wo_blocks} using a single Frank-Wolfe step on each iteration (i.e., single call to the linear optimization oracle per iteration).

\begin{lemma} \label{lemma:epsilon_gap}
Let $ \Big{\{} \epsilon_{t} = b\alpha(t+T_0)^{1/3} \Big{\}}_{t=1}^{T} $, when $T_0 = \max \Big{\{} 1, \frac{2 (G+2R\alpha) \sqrt{b}}{\alpha R^2} \Big{\}}$ and $b = \max \Big{\{}  2\left( \frac{G+2R\alpha}{\alpha} \right)^2,  \left(\frac{64  (G+2R\alpha)R^2}{\alpha}\right)^\frac{2}{3} \Big{\}}$. Then, throughout the run of Algorithm \ref{alg:wo_blocks}, for all time steps $t = 1, \dots, T$ it holds that $F_{t}(\x_{t}) -  F_{t}(\x_{t}^*) \leq \epsilon_{t}$.
\end{lemma}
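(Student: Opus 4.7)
The plan is to prove $h_t := F_t(\x_t) - F_t(\x_t^*) \leq \epsilon_t$ by induction on $t$. For the base case $t=1$, the function $F_1(\x) = T_0\frac{\alpha}{2}\Vert\x - \x_1\Vert^2$ is uniquely minimized over $\mK$ at $\x_1^* = \x_1$, so $h_1 = 0 \leq \epsilon_1$.

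For the inductive step, I would first apply the standard Frank--Wolfe descent argument to the iteration $\x_{t+1} = \x_t + \sigma_t(\v_t - \x_t)$. Using that $F_{t+1}$ is $(t+T_0)\alpha$-smooth, that $\Vert\v_t - \x_t\Vert \leq 2R$, that $\v_t$ minimizes $\nabla F_{t+1}(\x_t)^\top\x$ over $\mK$, that $\sigma_t$ is obtained by line search, and the convexity of $F_{t+1}$, one obtains that for any $\sigma \in [0,1]$,
\begin{align*}
h_{t+1} \leq (1-\sigma)\bigl[F_{t+1}(\x_t) - F_{t+1}(\x_{t+1}^*)\bigr] + 2\sigma^2(t+T_0)\alpha R^2.
\end{align*}

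Next I would control the drift $F_{t+1}(\x_t) - F_{t+1}(\x_{t+1}^*)$. Writing $F_{t+1} = F_t + g_t$ with $g_t$ as in the proof of Lemma \ref{lemma:full_regret} and using that $\x_t^*$ minimizes $F_t$,
\begin{align*}
F_{t+1}(\x_t) - F_{t+1}(\x_{t+1}^*) \leq h_t + g_t(\x_t) - g_t(\x_{t+1}^*).
\end{align*}
The Lipschitz estimate Eq.\eqref{eq:Lip_g_tilde_sc} bounds $g_t(\x_t) - g_t(\x_{t+1}^*)$ by $(G+2R\alpha)\Vert\x_t - \x_{t+1}^*\Vert$; splitting $\Vert\x_t - \x_{t+1}^*\Vert \leq \Vert\x_t - \x_t^*\Vert + \Vert\x_t^* - \x_{t+1}^*\Vert$, the first summand is handled via $(t-1+T_0)\alpha$-strong convexity of $F_t$ together with the induction hypothesis, giving $\Vert\x_t - \x_t^*\Vert \leq \sqrt{2\epsilon_t/((t-1+T_0)\alpha)}$, which using $T_0 \geq 1$ (so that $t+T_0 \leq 2(t-1+T_0)$) simplifies to $2\sqrt{b}/(t+T_0)^{1/3}$; the second is bounded by $2(G+2R\alpha)/((t+T_0)\alpha)$ via Eq.\eqref{eq:absolut_dist_sc}.

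Combining these bounds and choosing $\sigma$ of order $1/(t+T_0)^{2/3}$ makes every non-recursive term scale like $1/(t+T_0)^{1/3}$, matching the gain $\sigma\epsilon_t = \Theta(b\alpha/(t+T_0)^{1/3})$ provided by the $(1-\sigma)$-factor on the recursion. The induction then reduces to a single scalar inequality whose validity is ensured precisely by the stated parameter choices: the lower bound $b \geq 2((G+2R\alpha)/\alpha)^2$ absorbs the Lipschitz-drift term proportional to $(G+2R\alpha)\sqrt{b}$, the lower bound $b \geq (64(G+2R\alpha)R^2/\alpha)^{2/3}$ absorbs the Frank--Wolfe quadratic error proportional to $\alpha R^2$, and $T_0 \geq 2(G+2R\alpha)\sqrt{b}/(\alpha R^2)$ ensures that the residual $(G+2R\alpha)^2/(\alpha(t+T_0))$ arising from the distance bound of Eq.\eqref{eq:absolut_dist_sc} is dominated by the leading $1/(t+T_0)^{1/3}$ terms. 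I expect this calibration to be the main obstacle: the inductive architecture is standard Frank--Wolfe analysis tailored to the RFTL objective, and the conceptual novelty lies in using strong convexity to obtain $\Vert\x_t - \x_t^*\Vert = O(1/(t+T_0)^{1/3})$ rather than the $O(1)$ bound available in the general convex case — which is precisely what enables the refined $\epsilon_t \propto (t+T_0)^{1/3}$ rate — but the constants must be tracked carefully to ensure the induction closes uniformly in $t$.
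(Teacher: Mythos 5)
Your proposal is correct and shares the paper's skeleton -- the induction, the base case, and the drift bound $F_{t+1}(\x_{t}) - F_{t+1}(\x_{t+1}^*) \leq \epsilon_t + (G+2R\alpha)\left(\Vert \x_t - \x_t^*\Vert + \Vert \x_t^* - \x_{t+1}^*\Vert\right)$, handled via strong convexity, Eq.~\eqref{eq:absolut_dist_sc}, and the induction hypothesis, are identical -- but your analysis of the single Frank--Wolfe step is genuinely different. The paper never tunes $\sigma$: it splits on the Frank--Wolfe gap $\nabla F_{t+1}(\x_t)^\top(\x_t - \v_t)$. If the gap is at most $\epsilon_{t+1}$, the claim follows outright from Eq.~\eqref{eq:need_name}; otherwise the surrogate step $\hat{\sigma}_t = \min\{\nabla F_{t+1}(\x_t)^\top(\x_t-\v_t)/(\alpha(t+T_0)(2R)^2),\,1\}$ yields an \emph{additive} decrease of at least $\min\{2\alpha(t+T_0)R^2,\ \epsilon_{t+1}^2/(8\alpha(t+T_0)R^2)\}$, and the lower bounds on $T_0$ and $b$ are each invoked exactly once to show this decrease swallows the drift increment $4(G+2R\alpha)\sqrt{b}\,(t-1+T_0)^{-1/3}$, giving $h_{t+1}\leq \epsilon_t \leq \epsilon_{t+1}$. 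Your multiplicative recursion $h_{t+1}\leq(1-\sigma)[F_{t+1}(\x_t)-F_{t+1}(\x_{t+1}^*)]+2\sigma^2(t+T_0)\alpha R^2$ with $\sigma = \sigma_0(t+T_0)^{-2/3}$ is a valid alternative and does close under the stated parameters, but the calibration you deferred is tighter than you suggest, and it is exactly where a naive execution stalls: the constraint $\sigma\leq 1$ caps $\sigma_0 \leq (1+T_0)^{2/3}$, and with only $T_0\geq 1$ the Lipschitz-drift term alone can contribute $\sqrt{2}\,b\alpha\,(t+T_0)^{-1/3}$ (when $b = 2((G+2R\alpha)/\alpha)^2$) against a maximal gain of $2^{2/3}b\alpha\,(t+T_0)^{-1/3}$, and adding the residual $2(G+2R\alpha)^2/(\alpha(t+T_0))$ breaks the inequality at $t=1$. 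To close it one must first note that $G+2R\alpha\geq 2R\alpha$ forces $\sqrt{b}\geq 2\sqrt{2}R$ and hence $T_0 \geq 2(G+2R\alpha)\sqrt{b}/(\alpha R^2)\geq 8\sqrt{2}$; then $\sigma_0=4$ is admissible, and the three terms $2(G+2R\alpha)\sqrt{b}\leq \sqrt{2}\,b\alpha$, $2(G+2R\alpha)^2/(\alpha(t+T_0)^{2/3})\leq b\alpha/5$, and $2\sigma_0^2\alpha R^2 = 32\alpha R^2 \leq (32/128^{2/3})\,b\alpha < 1.3\,b\alpha$ sum to below $\sigma_0 b\alpha = 4b\alpha$, completing the induction. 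Note also that the roles you assign the constants shift under your route: the lower bound on $T_0$ chiefly buys a large admissible $\sigma_0$, whereas in the paper it makes the Case-2.1 decrease $2\alpha(t+T_0)R^2$ dominate the drift -- an illustration of why the paper's gap-based case analysis is the more forgiving bookkeeping, since its decrease terms scale favorably in $t+T_0$ rather than being capped by the step-size constraint.
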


\begin{proof}
We will prove this lemma by induction. We will first start with a few preliminary steps.

Since the step-size $\sigma_{t} \in [0,1]$ is chosen via line-search in Algorithm \ref{alg:wo_blocks} , we have that 
\begin{align*}
    F_{t+1}(\x_{t+1})  = F_{t+1}(\x_{t} + \sigma_{t}(\mathbf{v}_{t} - \x_{t})) \leq  F_{t+1}(\x_t). 
\end{align*}

Also, from the convexity of $F_t(\cdot)$ and the choice of $\v_t$, it follows that
{ \begin{align*}
     F_{t+1}(\x_t) - F_{t+1}(\x_{t+1}^*)  & \leq \nabla F_{t+1}(\x_t)^{\top} (\mathbf{x}_{t} - \x_{t+1}^*) \nonumber \\
     & \leq \nabla F_{t+1}(\x_t)^{\top} (\mathbf{x}_{t} - \mathbf{v}_{t}).
\end{align*}}

Then, from the last two observations, we obtain
{
\begin{align}
     F_{t+1}(\x_{t+1}) - F_{t+1}(\x_{t+1}^*) \leq
     \nabla F_{t+1}(\x_t)^{\top} (\mathbf{x}_{t} - \mathbf{v}_{t}) \label{eq:need_name}
\end{align}}

Since for every $t$, $F_t(\x)$ is $\alpha(t-1+T_0)$-strongly convex, using Eq.\eqref{eq:strong_convexity}, we have that for any $\x \in \mK$,
{ \begin{align}
    \Vert \x - \x_t^* \Vert^2 \leq \frac{2\left( F_t(\x) - F_t(\x_t^*) \right)}{\alpha (t-1+T_0)} \label{eq:F_t_strong_convexity}
\end{align}}

Now we start the proof using induction.\\
\emph{Induction base}: for $\tau =1 $, we have
\begin{align*}
    F_{1}(\x_{1}) -  F_{1}(\x_{1}^*) = 0 - T_0\frac{\alpha}{2} \Vert \x_{1}^* - \x_1 \Vert^2 \leq 0 \leq \epsilon_1.    
\end{align*}

\emph{Induction assumption}: for $ \tau = t$ it holds that 
\begin{align*}
    F_{t}(\x_{t}) -  F_{t}(\x_{t}^*) \leq \epsilon_t.
\end{align*}

\emph{Induction step}: Let $\tau = t+1$ and we need to show that $F_{t+1}(\x_{t+1}) -  F_{t+1}(\x_{t+1}^*) \leq \epsilon_{t+1}$. We start by bounding the initial gap, $ F_{t+1}(\x_{t}) - F_{t+1}(\x_{t+1}^*)$ and then by bounding the improvement step $F_{t+1}(\x_{t+1}) -  F_{t+1}(\x_t)$. Finally, we will combine them.\newline
Since $F_{t}(\x_{t+1}^*) \geq F_{t}(\x_{t}^*)$, using Eq.\eqref{eq:Lip_g_tilde_sc}, we have
\begin{align*}
    F_{t+1}(\x_{t}) & - F_{t+1}(\x_{t+1}^*)  = \nonumber \\
    & = F_{t}(\x_{t}) - F_{t}(\x_{t+1}^*) + g_t (\x_{t}) - g_t(\x_{t+1}^*) \nonumber \\
    & \leq F_t(\x_t) - F_{t}(\x_{t}^*) + \left( G + 2 R \alpha \right)  \Vert \x_t - \x_{t+1}^* \Vert. 
\end{align*}
Using the induction assumption and the Cauchy-Schwarz inequality, we have
{
\begin{align}
    & F_{t+1}(\x_{t}) - F_{t+1}(\x_{t+1}^*)  \leq \nonumber \\
    & \leq \epsilon_t + \left( G + 2 R \alpha \right)   \Vert \x_t - \x_{t}^* \Vert + \left( G + 2 R \alpha \right)  \Vert \x_t^* - \x_{t+1}^* \Vert   \nonumber \\
    & \leq \epsilon_t +   \frac{\left( G + 2 R \alpha \right) \sqrt{2  \epsilon_{t}} }{\sqrt{\alpha(t-1+T_0)}} +   \frac{2 \left( G + 2 R \alpha \right)^2 }{(t + T_0)\alpha} \label{eq:initial_gap_1}.
\end{align}}

The last inequality follows using Eq.\eqref{eq:absolut_dist_sc}, Eq.\eqref{eq:F_t_strong_convexity}, and the induction assumption.

We now show that on the RHS of Eq.\eqref{eq:initial_gap_1}, the second term is larger than the third term. Since $\epsilon_t = b\alpha(t+T_0)^{1/3}$, for every $t$ it holds that
{
\begin{align*}
    \sqrt{\frac{2 \epsilon_{t} }{\alpha(t-1+T_0)}}  = \sqrt{\frac{2b(t+T_0)^{1/3} }{(t-1+T_0)}} & \geq  \frac{\sqrt{2b} }{(t-1+T_0)^{1/3}}  \\
    & \geq    \frac{2 \left( G + 2 R \alpha \right) }{(t + T_0)\alpha}
\end{align*}}
The last inequality is since $b \geq 2\left( \frac{G+2R\alpha}{\alpha} \right)^2$,  $T_0 \geq 1$.

Then, plugging-in the value of $\epsilon_t$ in Eq.\eqref{eq:initial_gap_1}, using the last result, and since {\small $\frac{(t+T_0)^{1/3} }{(t-1+T_0)} \leq  \frac{ 2^\frac{1}{3} }{(t-1+T_0)^{2/3}}$ }, we obtain an upper bound for the initial gap:
{
\begin{align}
    & F_{t+1}(\x_{t}) - F_{t+1}(\x_{t+1}^*) \leq \epsilon_t +   \frac{4 \left( G + 2 R \alpha \right)\sqrt{ b} }{(t-1+T_0)^{\frac{1}{3}}}. \label{eq:initial_gap}
\end{align}}

Now we will analyze the improvement step. For our analysis we define the optimal step-size $\hat{\sigma}_t = \min \Big{\{} \frac{  \nabla F_{t+1}(\x_t)^{\top} (\x_{t} - \mathbf{v}_{t})}{\alpha (t+T_0) (2R)^2}, 1 \Big{\}}$. 
Since $\sigma_{t}$ is chosen via line-search, we have that 
\begin{align*}
    F_{t+1}(\x_{t+1}) \leq  F_{t+1}(\x_t + \hat{\sigma}_{ t}(\mathbf{v}_{t} - \x_t)).
\end{align*}

Since $F_{t+1}(\mathbf{x})$ is $\alpha(t+T_0)$-smooth, it holds that 
{ \begin{align*}
    F_{t+1}(\x_{t+1}) -  F_{t+1}  (\x_t) \leq & \hat{\sigma}_{t} \nabla F_{t+1}(\x_t)^{\top} (\mathbf{v}_{t} - \x_t) \nonumber \\
    & + \frac{\alpha(t+T_0)}{2} \hat{\sigma}_{t}^2  \Vert \mathbf{v}_{t} - \x_t \Vert ^2. 
\end{align*} }

We now consider several cases.
\\ \emph{Case 1}: If $\nabla F_{t+1}(\x_t)^{\top} (\mathbf{x}_{t} - \mathbf{v}_{t}) \leq \epsilon_{t+1}$, then from Eq. \eqref{eq:need_name} we have 
\begin{align*}
    F_{t+1}(\x_{t+1}) -  F_{t+1}(\x_{t+1}^*) \leq \epsilon_{t+1}. 
\end{align*}
\emph{Case 2}: Else, $\nabla F_{t+1}(\x_t)^{\top} (\mathbf{x}_{t} - \mathbf{v}_{t}) \geq \epsilon_{t+1}$, then we have two sub cases: \\
\emph{Case 2.1}: If $ \alpha (t+T_0) (2R)^2 \leq \nabla F_{t+1}(\x_t)^{\top} (\x_{t} - \mathbf{v}_{t})$, then  $\hat{\sigma}_t = 1$ and we have
{ \begin{align}
    & F_{t+1}(\x_{t+1}) -  F_{t+1}(\x_t)  \leq  - \frac{\alpha(t+T_0)(2R)^2}{2}. \label{eq:improvment_case_2_1}
\end{align}}

\emph{Case 2.2}: Else, $ \alpha (t+T_0) (2R)^2 \geq \nabla F_{t+1}(\x_t)^{\top} (\x_{t} - \mathbf{v}_{t})$, then $\hat{\sigma}_t = \frac{  \nabla F_{t+1}(\x_t)^{\top} (\x_{t} - \mathbf{v}_{t})}{\alpha (t+T_0) (2R)^2}$ and we have
{
\begin{align}
    F_{t+1}(\x_{t+1}) -  F_{t+1}(\x_t)  & \leq  - \frac{ \left( \nabla F_{t+1}(\x_t)^{\top} (\x_{t} - \mathbf{v}_{t}) \right)^2 }{2 \alpha (t+T_0) (2R)^2} \nonumber \\
    & \leq - \frac{ \epsilon_{t+1}^2 }{2 \alpha (t+T_0) (2R)^2} \label{eq:improvment_case_2_2}
\end{align}} \newline
Combining Eq.\eqref{eq:initial_gap}, Eq.\eqref{eq:improvment_case_2_1}, Eq.\eqref{eq:improvment_case_2_2} and plugging-in the value of $\epsilon_{t+1}$, we have
{
\begin{align}
    &F_{t+1}(\x_{t+1})  -
     F_{t+1}(\x_{t+1}^*)  \leq  \epsilon_t + \frac{4 \left( G + 2 R \alpha \right)\sqrt{ b} }{(t-1+T_0)^{\frac{1}{3}}}  \nonumber \\ 
    & ~~~ - \min \bigg{\{} 2 \alpha(t+T_0) R^2, \frac{ b^2\alpha(t+1+T_0)^{2/3} }{8  (t+T_0) R^2} \bigg{\}} \label{eq:final_gap}
\end{align}}

Since $T_0 \geq \frac{2(G+2R\alpha)}{\alpha R^2} \sqrt{b}$, we have $2 \alpha(t+T_0) R^2 \geq \frac{4 \left( G + 2 R \alpha \right)\sqrt{ b} }{(t-1+T_0)^{\frac{1}{3}}}$ and also, since $b \geq \left(\frac{64(G+2R\alpha)R^2}{\alpha}\right)^\frac{2}{3}$, we have $\frac{ b^2\alpha(t+1+T_0)^{2/3} }{8  (t+T_0) R^2} \geq \frac{4 \left( G + 2 R \alpha \right)\sqrt{ b} }{(t-1+T_0)^{\frac{1}{3}}}$. From both, we have that 
{\small\begin{align*}
    \frac{4 \left( G + 2 R \alpha \right)\sqrt{ b} }{(t-1+T_0)^{\frac{1}{3}}}  \leq  \min \bigg{\{} 2 \alpha(t+T_0) R^2, \frac{ b^2\alpha(t+1+T_0)^{2/3} }{8  (t+T_0) R^2} \bigg{\}}. 
\end{align*}}
\newline Then, from Eq.\eqref{eq:final_gap} and since $\epsilon_t \leq \epsilon_{t+1}$, we finally obtain $    F_{t+1}(\x_{t+1})  - F_{t+1}(\x_{t+1}^*)  \leq  \epsilon_t \leq \epsilon_{t+1}$.
\end{proof}
With all technical ingredients set in place, we can now prove Theorem \ref{thm:main_fi_sc}.
\begin{proof}[Proof of Theorem \ref{thm:main_fi_sc}] Using Lemma \ref{lemma:full_regret}, Lemma \ref{lemma:epsilon_gap}, and plugging-in $T_0 = \max \Big{\{} 1, \frac{2 (G+2R\alpha) \sqrt{b}}{\alpha R^2} \Big{\}}$ and $\big{\{}\epsilon_t = b\alpha(t+T_0)^{1/3} \big{\}}_{t=1}^{T}$, we have
{
\begin{align*}
    \mathcal{R}_{T} \leq &  \frac{2 \left(  G + 2  R \alpha \right)^2}{\alpha} (1 +\ln{(T)})  + 4 (G+2R\alpha) \sqrt{b} \\
    &  +  2 \alpha R^2  + \sqrt{2}  G \sqrt{b} \sum_{t=1}^{T} \sqrt{\frac{(t+T_0)^{1/3} }{(t-1+T_0)}}\\
    \leq &  \frac{2 \left(  G + 2  R \alpha \right)^2}{\alpha} (1 +\ln{(T)})  + 4 (G+2R\alpha) \sqrt{b} \nonumber \\
    & +  2 \alpha R^2 +  2\sqrt{2}  G \sqrt{b} T^\frac{2}{3}
\end{align*}}

Plugging in $b = \max \Big{\{}  2\left( \frac{G+2R\alpha}{\alpha} \right)^2,  \left(\frac{64  (G+2R\alpha)R^2}{\alpha}\right)^\frac{2}{3} \Big{\}}$, we obtain the regret bound in the theorem.
\end{proof}

\section{BANDIT SETTING}
In this section we present our improved bound for the bandit-information setting. Our algorithm is simply that of \cite{garber2020improved} and is presented below as Algorithm \ref{alg:Bandit}. Due to lack of space, we refer the interested reader to \cite{garber2020improved} for more details.

\begin{algorithm}[ht]
  \KwData{$T$, $r, R$, $K$, $\alpha$, $\delta\in(0,r]$, $\{\epsilon_{m}\}_{m=1}^{T/K}$, $T_0$}
  $\mathbf{x}_0 \gets $ arbitrary point in $\mK_{\delta}$, $\x_1 \gets \x_0$\\
  \For{$~ m = 1,\ldots,\frac{T}{K} ~$}{
    define  $\hat{F}_m(\mathbf{x}):= \makecell{
        \sum_{i=1}^{m-1}  \mathbf{x}^{\top} \hat{\mathbf{g}}_i + K \frac{\alpha}{2}\Vert\mathbf{x} - \mathbf{x}_{i-1}\Vert^2  \\+ T_0 \frac{\alpha}{2} \Vert\mathbf{x} - \mathbf{x}_1\Vert^2}$
    \\
    \If{$m>1$}{ 
    run Algorithm \ref{alg:Cg} with set $\mathcal{K}_{\delta}$,  tolerance $\epsilon_{m}$, initial vector $\mathbf{x}_{m-1}$, and function $\hat{F}_{m}(\mathbf{x})$. Execute \textbf{in parallel} to the following \textbf{for} loop over $s$
    }
    \For{$~ s = 1 ,\ldots, K ~$}{
    $\mathbf{u}_t$ $\sim S^n$ \comment{$t=(m-1)K+s$}\\
    play $\mathbf{y}_t \xleftarrow{} \mathbf{x}_{m-1} + \delta \mathbf{u}_t$ and observe $f_t(\mathbf{y}_t)$\\
    $\mathbf{g}_t$ $\xleftarrow{} \frac{n}{\delta} f_t(\mathbf{y}_t) \mathbf{u}_t$
    }
    $\hat{\mathbf{g}}_m$ $\xleftarrow{}$ $\sum_{s=1}^{K} \mathbf{g}_{(m-1)K+s}$\\
    \If{$m>1$}{
   $\mathbf{x}_{m} \gets$ output of Algorithm \ref{alg:Cg}
    }
  }
  \caption{Online Bandit Frank-Wolfe (see \cite{garber2020improved})}\label{alg:Bandit}
\end{algorithm}
\begin{algorithm}[!]
  \KwData{$\epsilon$, $\x_{in}$, $F_m(\x)$}
  $\mathbf{z}_1 \gets \x_{in}$, $\tau \gets 0 $\\
  \Do{$\nabla F_m(\mathbf{z}_\tau)^{\top} (\mathbf{z}_{\tau} - \mathbf{v}_{\tau}) > \epsilon$}{
    $\tau \gets \tau + 1 $\\
    $ \mathbf{v}_\tau \in \argmin\limits_{\x \in \mK} \{ \nabla F_m(\mathbf{z}_\tau)^{\top} \cdot \x \} $\\
	{\small$ \sigma_{\tau} = \argmin\limits_{\sigma \in [0, 1]}  \{ F_m(\mathbf{z}_\tau + \sigma (\mathbf{v}_\tau - \mathbf{z}_\tau)) \}$\comment{Line-search}}\\
	$ \mathbf{z}_{\tau+1} = \mathbf{z}_\tau + \sigma_{\tau} (\mathbf{v}_\tau - \mathbf{z}_\tau) $ \comment{$\mathbf{z}_{\tau+1} \in \mK $}
  }
  $\x_{out} \gets \mathbf{z}_{\tau}$
  \caption{Frank-Wolfe with Stopping Condition}\label{alg:Cg}
\end{algorithm}

\begin{theorem} \label{thm:bandit_main_sc}
Suppose all loss functions are $\alpha$-strongly convex for some $\alpha>0$. For all $ c > 0 $ such that $\frac{cT^{-1/3}}{r} \leq 1$, setting $\delta = c T^{-\frac{1}{3}} $, $K = T^{\frac{2}{3}}$, $T_0 = \max \big{\{} 4T^{\frac{2}{3}}, \frac{8}{\alpha} \big{\}}$,  $\epsilon_{m} = 16R^2\beta_{m}^{\frac{1}{3}}$, when $\beta_{m} = \alpha(mK+T_0)$, in Algorithm \ref{alg:Bandit}, guarantees that the expected regret is upper-bounded by
{
\begin{align*}
        \mathbb{E}[\mathcal{R}_{T}] \leq & \left( \frac{4 c R G}{r}  + 2 R^2 \alpha + \frac{4G}{\alpha^{\frac{1}{3}}} \right) T^{\frac{2}{3}} \\
        & + \left( \frac{2nM}{c \sqrt{\alpha}}  +  \frac{2G}{\sqrt{\alpha}} + 4 R \sqrt{\alpha} \right)^2  T^{\frac{2}{3}} (1+\ln(T)),
\end{align*}}
and that the expected overall number of calls to the linear optimization oracle is upper-bounded by
{
\begin{align*}
        \mathbb{E} \left[ \sum_{m=1}^{\frac{T}{K}} L_m \right] \leq &  \left(\frac{ n M}{c 2 R } + \frac{G}{2 R} +  \alpha \right) T \\
        & + \frac{1}{ \alpha^{\frac{2}{3}}} \left(\frac{ n M}{c 2 R} + \frac{G}{2 R} + \alpha \right)^2 T.
\end{align*}}
In particular, if $ \frac{(nM)^2  \ln{T}}{r^2 G R \alpha}  \leq T$ then, setting $c=\left( r(nM)^2 \ln{T} / G R \alpha \right)^\frac{1}{3}$, we have 

{
\begin{align*}
        & \mathbb{E}[\mathcal{R}_{T}] \leq  \left( \frac{4 (n M G R)^\frac{2}{3} (\ln{T})^\frac{1}{3}}{\alpha^\frac{1}{3} r^\frac{2}{3}} + 2 R^2 \alpha + \frac{4G}{\alpha^{\frac{1}{3}}} \right) T^{\frac{2}{3}} \\
         & +   \left( \frac{2 (n M G R)^\frac{1}{3} }{\alpha^\frac{1}{6} r^\frac{1}{3} (\ln{T})^\frac{1}{3}}  +  \frac{2G}{\sqrt{\alpha}} + 4 R \sqrt{\alpha} \right)^2  T^{\frac{2}{3}} (1+\ln(T)),
\end{align*}}
and     
{
\begin{align*}
        \mathbb{E} \left[ \sum_{m=1}^{\frac{T}{K}} L_m \right] \leq &  \left(\frac{  (n M G \alpha)^\frac{1}{3} }{2 r^\frac{1}{3}  R^\frac{2}{3} (\ln{T})^\frac{1}{3} } + \frac{G}{2 R} +  \alpha \right) T \\
        & + \frac{1}{ \alpha^{\frac{2}{3}}} \left(\frac{  (n M G \alpha)^\frac{1}{3} }{2 r^\frac{1}{3}  R^\frac{2}{3}  (\ln{T})^\frac{1}{3} } + \frac{G}{2 R} + \alpha \right)^2 T.
\end{align*}}
\end{theorem}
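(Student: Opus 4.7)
The plan is to lift the full-information analysis of Theorem \ref{thm:main_fi_sc} to the blocked bandit setting by combining it with the smoothing/gradient-estimation reduction of \cite{Flaxman05} and the block-aggregation idea of \cite{garber2020improved}. First, I would reduce the true regret to a smoothed regret using Lemmas \ref{lemma:hazan_smooth} and \ref{lemma:hazan_gradient}: with the shrunk comparator $\x^*_\delta=(1-\delta/r)\x^*\in\mK_\delta$ and $t=(m-1)K+s$,
\[
\E[\mathcal{R}_T] \;\leq\; \sum_{m,s} \E\bigl[\hat f_{t,\delta}(\x_{m-1})-\hat f_{t,\delta}(\x^*_\delta)\bigr] \;+\; GT\delta(1+R/r),
\]
whose second term, under $\delta=cT^{-1/3}$, contributes the $\Theta(cRG/r)\cdot T^{2/3}$ piece of the claimed bound.

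Next I would run the block analog of Lemmas \ref{lemma:full_regret} and \ref{lemma:epsilon_gap}. Define $\hat h_m(\x):=\sum_{s=1}^K\hat f_{t,\delta}(\x)$, which is $K\alpha$-strongly convex by Lemma \ref{lemma:hazan_smooth} with $\E[\hat\g_m\mid\x_{m-1}]=\nabla\hat h_m(\x_{m-1})$, and $\x^*_m:=\argmin_{\x\in\mK_\delta}\hat F_m(\x)$, where $\hat F_m$ is $\beta_m$-strongly convex with $\beta_m=\alpha(mK+T_0)$. Linearizing $\hat h_m$ using strong convexity, splitting $\x_{m-1}-\x^*_\delta=(\x_{m-1}-\x^*_{m-1})+(\x^*_{m-1}-\x^*_\delta)$, and applying an adapted Lemma \ref{lemma:FTL_BTL} (with anchor $T_0\alpha/2$ and per-block functions $g_m(\x)=\x^\top\hat\g_m+\tfrac{K\alpha}{2}\|\x-\x_{m-1}\|^2$) to the second piece, one obtains the bandit analog of Eqs.\ (\ref{eq:absolut_dist_sc})--(\ref{eq:ftl_btl}) with drift $\|\x^*_m-\x^*_{m+1}\|\leq 2\|\hat\g_m\|/\beta_{m+1}$. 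The first piece is the Frank-Wolfe approximation error, controlled by $\beta_{m-1}$-strong convexity and the stopping criterion of Algorithm \ref{alg:Cg}, giving $\|\x_{m-1}-\x^*_{m-1}\|\leq\sqrt{2\epsilon_{m-1}/\beta_{m-1}}$.

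The crucial new ingredient is a second-moment bound on the aggregated gradient. Since the $\u_t$ are independent within a block, $\E[\|\hat\g_m\|^2\mid\x_{m-1}] \leq K(nM/\delta)^2 + (KG)^2$, so $\E[\|\hat\g_m\|] \leq \sqrt{K}\,nM/\delta + KG$; this $\sqrt{K}$ variance reduction is what enables the $T^{2/3}$ rate rather than the $T^{3/4}$ of \cite{garber2020improved}. With the stated choices of $K=T^{2/3}$, $T_0=\Theta(T^{2/3})$, and $\epsilon_m=16R^2\beta_m^{1/3}$, both the drift sum $\sum_m \E[\|\hat\g_m\|^2]/\beta_{m+1}$ and the approximation sum $\sum_m \E[\|\hat\g_m\|]\sqrt{\epsilon_{m-1}/\beta_{m-1}}$ simplify to $T^{2/3}\ln T$ times the squared variance scale $(nM/(c\sqrt\alpha)+G/\sqrt\alpha+R\sqrt\alpha)^2$, matching the claimed regret. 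For the oracle count, $\hat F_m$ is $\beta_m$-smooth (its only non-linear terms are quadratic) and $\beta_m$-strongly convex, so the same Frank-Wolfe-with-gap analysis as in Lemma \ref{lemma:epsilon_gap} gives $L_m\leq 1+\|\hat\g_m\|^2/(\beta_m\epsilon_m)$ per subroutine call, which sums to the stated oracle bound.

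The main obstacle is executing the expectation accounting correctly: $\x^*_m$ and $\x_m$ are measurable with respect to $\hat\g_1,\ldots,\hat\g_{m-1}$, so the deterministic FTL--BTL and drift bounds must be invoked pathwise before averaging, and only then may one condition on the filtration generated by the earlier blocks to invoke the intra-block second-moment bound on $\hat\g_m$ (conditional independence of the $\u_t$ is what makes the martingale cross terms vanish). Once this accounting is in place, the first form of the bound follows by substituting the parameter choices, and optimizing over $c$ (balancing $cRGT^{2/3}/r$ against $(nM/(c\sqrt\alpha))^2T^{2/3}\ln T$) gives $c=(r(nM)^2\ln T/(GR\alpha))^{1/3}$ and the $\tilde{O}((nT)^{2/3})$ form stated in the theorem.
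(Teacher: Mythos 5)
Your overall route is the paper's own: reduce to smoothed losses with the shrunk comparator (the paper's Eqs.~\eqref{eq:Bandit_regret_rftl_1}--\eqref{eq:Bandit_regret_rftl_2}, with constant $3\delta GT+\delta\frac{R}{r}GT$ rather than your $\delta GT(1+R/r)$, a harmless slippage), run the block FTL--BTL argument on $g_m(\x)=\hat{\g}_m^{\top}\x+\frac{K\alpha}{2}\Vert\x-\x_{m-1}\Vert^2$ with the drift bound from strong convexity of $\hat{F}_{m+1}$ (the paper's Eq.~\eqref{eq:bandit_absolut_dist_sc}; note the Lipschitz constant of $g_m$ is $\Vert\hat{\g}_m\Vert+2RK\alpha$, so your drift bound $2\Vert\hat{\g}_m\Vert/\beta_{m+1}$ is missing the $2RK\alpha$ term that the theorem's $4R\sqrt{\alpha}$ scale comes from), invoke the second-moment bound of Lemma~\ref{lemma:expectation_gradient}, and handle the measurability/filtration accounting exactly as in the appendix proof of Lemma~\ref{lemma:Bandit_RFTL_sc}. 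All of this matches the paper, including the final substitution and the optimization over $c$.

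The genuine gap is in your oracle-count step. Your claimed per-block bound $L_m\leq 1+\Vert\hat{\g}_m\Vert^2/(\beta_m\epsilon_m)$ does not follow from the Frank--Wolfe gap analysis and, if it were true, would sum to $o(T)$ (roughly $T^{4/9}$ plus $T/K=T^{1/3}$ under the stated parameters), contradicting the theorem's $\Theta(T)$ oracle bound --- a sign the step fails. The correct statement (Lemma~\ref{lemma:cg_epsilon_error_L_iterations}) is $L_m\leq\max\big\{\tfrac{4\beta_m(2R)^2}{\epsilon_m^2}(h_{m,1}-\epsilon_m),\ \tfrac{2}{\epsilon_m}(h_{m,1}-\epsilon_m)\big\}$, where the warm-start gap $h_{m,1}=\hat{F}_m(\x_{m-1})-\hat{F}_m(\x_m^*)$ must itself be bounded recursively (the paper's Eq.~\eqref{eq:Bandit_upper_bound_h_1_cg_lemma_sc}): $h_{m,1}$ exceeds $\epsilon_{m-1}$ by a cross term of order $\sqrt{\epsilon_{m-1}/\beta_{m-2}}\,\big(\Vert\hat{\g}_{m-1}\Vert+2RK\alpha\big)$ plus a drift-squared term. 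With $\epsilon_m=16R^2\beta_m^{1/3}$ the prefactor $\beta_m(2R)^2/\epsilon_m^2\propto\beta_m^{1/3}/R^2$ turns that cross term into $\Theta\big((\sqrt{K}nM/\delta+KG+RK\alpha)/R\big)$ iterations \emph{per block}, and summing over the $T/K$ blocks is precisely what produces the dominant linear bracket $\big(\tfrac{nM}{2Rc}+\tfrac{G}{2R}+\alpha\big)T$ in the theorem; your formula omits this contribution entirely. You do write down the needed recursion quantity $\sum_m\mathbb{E}[\Vert\hat{\g}_m\Vert]\sqrt{\epsilon_{m-1}/\beta_{m-1}}$, but you attach it to the regret (where the correct weight is the Lipschitz constant $G$ of the smoothed losses, giving $\sqrt{2/\alpha}\,GK\sum_m\sqrt{\epsilon_m/((m-1)K+T_0)}$ as in Lemma~\ref{lemma:Bandit_RFTL_sc}) rather than to the iteration count, where it actually belongs. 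Routing the warm-start recursion into the $L_m$ bound, as in the paper's proof of the theorem, repairs the argument; the rest of your proposal is sound.
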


\subsection{Proof of Theorem \ref{thm:bandit_main_sc}}

The following lemma is a key ingredient for obtaining improved bounds for the bandit setting. At a high-level it can be used to show that the ``drift" in objective $\hat{F}_m(\cdot)$ from one round to the next, which is due to the gradients on each new block, as a first approximation, scales like $\sqrt{K}$ (the second term on the RHS $K^2G^2$ will typically be smaller) and not linear in $K$. On the other-hand, during such a block of length $K$ we can squeeze $O(K)$ linear optimization steps. This intuitively explains why the optimal tradeoff is to take block-length $K>1$ (as opposed to the full-information setting).

\begin{lemma} \label{lemma:expectation_gradient}[ Lemma 5 in  \cite{garber2020improved}]
    For any iteration (block) $m$ of the outer-loop in Algorithm \ref{alg:Bandit} it holds that
    \begin{align}
        \mathbb{E} \left[  \Vert \hat{\mathbf{g}}_m \Vert  \right]^2 \leq \mathbb{E} \left[  \Vert \hat{\mathbf{g}}_m \Vert ^2 \right] \leq K (nm\delta^{-1})^2 + K^2 G^2. \nonumber
    \end{align}
\end{lemma}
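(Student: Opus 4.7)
The plan is short. The first inequality $\E[\|\hat{\g}_m\|]^2 \le \E[\|\hat{\g}_m\|^2]$ is Jensen's inequality applied to $x\mapsto x^2$, so all content lies in the second. The essential structural point is that inside block $m$ the iterate $\x_{m-1}$ is fixed: it was produced by Algorithm \ref{alg:Cg} in parallel with the previous block, so it is measurable with respect to the randomness used \emph{before} the fresh directions $\u_{t_1},\dots,\u_{t_K}$ of the current block. Those directions are i.i.d.\ uniform on $\mS^n$, so conditional on $\x_{m-1}$ the summands $\g_{t_s}=\tfrac{n}{\delta}f_{t_s}(\x_{m-1}+\delta\u_{t_s})\u_{t_s}$ of $\hat{\g}_m=\sum_{s=1}^K\g_{t_s}$ are independent. (The $m$ in the right-hand side of the stated bound should be read as $M$, the uniform loss bound.)

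My first step is to record per-summand bounds. By Lemma \ref{lemma:hazan_gradient}, $\boldsymbol{\mu}_{t_s}:=\E[\g_{t_s}\mid\x_{m-1}]=\nabla\hat{f}_{t_s,\delta}(\x_{m-1})$, and by Lemma \ref{lemma:bertsekas_grdient} this satisfies $\|\boldsymbol{\mu}_{t_s}\|\le G$. The uniform boundedness $|f_{t_s}|\le M$ together with $\|\u_{t_s}\|=1$ gives the almost-sure bound $\|\g_{t_s}\|\le nM/\delta$, hence $\E[\|\g_{t_s}\|^2\mid\x_{m-1}]\le (nM/\delta)^2$.

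My second step is a bias--variance split inside the block. Writing $\hat{\g}_m=\sum_s(\g_{t_s}-\boldsymbol{\mu}_{t_s})+\sum_s\boldsymbol{\mu}_{t_s}$ and taking conditional expectation given $\x_{m-1}$, the cross-term vanishes because the centered summands have conditional mean zero while $\sum_s\boldsymbol{\mu}_{t_s}$ is $\x_{m-1}$-measurable, so
\begin{align*}
\E\bigl[\|\hat{\g}_m\|^2 \mid \x_{m-1}\bigr] &= \E\Bigl[\bigl\|\textstyle\sum_s(\g_{t_s}-\boldsymbol{\mu}_{t_s})\bigr\|^2 \,\big|\, \x_{m-1}\Bigr] + \Bigl\|\textstyle\sum_s\boldsymbol{\mu}_{t_s}\Bigr\|^2.
\end{align*}
Conditional independence annihilates the off-diagonal inner products in the variance term, leaving $\sum_s\E[\|\g_{t_s}-\boldsymbol{\mu}_{t_s}\|^2\mid\x_{m-1}]\le \sum_s\E[\|\g_{t_s}\|^2\mid\x_{m-1}]\le K(nM/\delta)^2$, while the triangle inequality combined with $\|\boldsymbol{\mu}_{t_s}\|\le G$ yields $\|\sum_s\boldsymbol{\mu}_{t_s}\|^2\le K^2G^2$. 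Adding these two pieces and then taking the outer expectation over $\x_{m-1}$ gives the claimed bound.

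The calculation is essentially mechanical once the conditioning is set up correctly; the only point worth caution is that clean independence is available only \emph{within} a block, since across blocks each $\x_{m-1}$ depends on all earlier randomness through Algorithm \ref{alg:Cg}. That is exactly why the right-hand side carries factors $K$ and $K^2$ (block-length) rather than $T$ and $T^2$: it is an intra-block variance/bias bound, applied one block at a time.
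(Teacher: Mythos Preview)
The paper does not supply its own proof of this lemma: it is quoted verbatim as Lemma~5 of \cite{garber2020improved} and is not among the results proved in the appendix. So there is nothing in the present paper to compare your argument against.

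That said, your proof is correct and is precisely the intended one: condition on $\x_{m-1}$ (which is measurable with respect to the randomness of earlier blocks, since the adversary is oblivious and the $\u_{t_s}$ of block~$m$ are fresh), use Lemma~\ref{lemma:hazan_gradient} and Lemma~\ref{lemma:bertsekas_grdient} to identify the conditional mean and bound its norm by $G$, use $|f_t|\le M$ to bound each $\|\g_{t_s}\|$ by $nM/\delta$, and then perform the bias--variance split to pick up the factors $K$ and $K^2$. Your observation that the ``$m$'' on the right-hand side is a typo for the loss bound $M$ is also correct; the block index has no business appearing there, and the bound $\|\g_t\|\le nM/\delta$ is what is actually used downstream (e.g.\ in the proof of Lemma~\ref{lemma:Bandit_RFTL_sc}). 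One cosmetic remark: $\x_{m-1}$ is produced by Algorithm~\ref{alg:Cg} running in parallel with block $m-1$ (not ``the previous block'' from the vantage of block $m$), but this does not affect your measurability claim.
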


The following lemma is analogous to Lemma \ref{lemma:full_regret} in the full-information setting, and upper-bounds the regret of Algorithm \ref{alg:Bandit} in terms of the approximation error each iterate $\x_m$ guarantees with respect to the corresponding objective $\hat{F}_m(\cdot)$ (which is captured by the scalar $\epsilon_m$). The proof is given in the appendix.

\begin{lemma}\label{lemma:Bandit_RFTL_sc}
Let $\{\epsilon_{m}\}_{m=1}^{T/K} > 0$. Suppose that throughout the run of Algorithm \ref{alg:Bandit}, for all blocks $m = 1, \dots, \frac{T}{K}$ it holds that $\hat{F}_{m}(\mathbf{x}_{m}) -  \hat{F}_{m}(\mathbf{x}_{m}^*) \leq \epsilon_{m}$. Then, the expected regret of the algorithm is upper-bounded by
{
    \begin{align*}
        \mathbb{E}[\mathcal{R}_{T}] \leq & \frac{4}{\alpha} \left(  \frac{nM}{\delta} + \sqrt{K} (G + 2 R \alpha) \right)^2 (1 + \ln{T}) \\
        & + 2 \alpha R^2 T_0 + 3 \delta G T + \delta \frac{R}{r} G T  \\
        & + \sqrt{\frac{2}{\alpha}} G K \sum_{m=1}^{\frac{T}{K}} \sqrt{\frac{\epsilon_{m}}{(m-1) K + T_0}}.
    \end{align*}}
\end{lemma}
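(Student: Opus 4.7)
The plan is to mirror the proof of Lemma \ref{lemma:full_regret}, with two preliminary reductions that account for bandit feedback. First, translate the realised regret into expected regret against the smoothed losses $\hat{f}_{t,\delta}$: since $\y_t=\x_{m-1}+\delta\u_t$ with $\|\u_t\|=1$, $G$-Lipschitzness yields $f_t(\y_t)\leq f_t(\x_{m-1})+\delta G$, while Lemma \ref{lemma:hazan_smooth} gives $|f_t-\hat{f}_{t,\delta}|\leq\delta G$ on $\mK_\delta$, together contributing an additive $3\delta GT$ term. Second, replace the comparator $\x^*\in\mK$ by its contraction $\hat{\x}^*:=(1-\delta/r)\x^*\in\mK_\delta$, which costs $\delta GRT/r$ by another application of Lipschitzness and $\|\x^*\|\leq R$. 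Applying $\alpha$-strong convexity of $\hat{f}_{t,\delta}$ over $\mK_\delta$ (again from Lemma \ref{lemma:hazan_smooth}) and summing over the $K$ rounds of block $m$ then bounds the block's contribution by $\hat{\nabla}_m^\top(\x_{m-1}-\hat{\x}^*)-K(\alpha/2)\|\x_{m-1}-\hat{\x}^*\|^2$, where $\hat{\nabla}_m:=\sum_{s=1}^K\nabla\hat{f}_{(m-1)K+s,\delta}(\x_{m-1})$. By Lemma \ref{lemma:hazan_gradient}, $\E[\hat{\g}_m\mid\mathcal{H}_{m-1}]=\hat{\nabla}_m$, and $\x_{m-1},\hat{\x}^*$ are $\mathcal{H}_{m-1}$-measurable, so under the total expectation we may substitute $\hat{\g}_m$ for $\hat{\nabla}_m$.

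Next, I would define $g_m(\x):=\x^\top\hat{\g}_m+K(\alpha/2)\|\x-\x_{m-1}\|^2$, so that $\hat{F}_m$ is exactly the regularised-leader objective for the sequence $\{g_i\}$ and is $\alpha((m-1)K+T_0)$-strongly convex. Completing the square as in the full-information proof, decompose
$\hat{\g}_m^\top(\x_{m-1}-\hat{\x}^*)-K(\alpha/2)\|\x_{m-1}-\hat{\x}^*\|^2\leq \hat{\g}_m^\top(\x_{m-1}-\x_m^*)+[g_m(\x_m^*)-g_m(\hat{\x}^*)]$.
Summed over $m$, the bracketed portion is controlled by Lemma \ref{lemma:FTL_BTL}; the resulting drift terms $g_m(\x_m^*)-g_m(\x_{m+1}^*)$ are bounded using the Lipschitz-like inequality $g_m(\x)-g_m(\y)\leq(\|\hat{\g}_m\|+2RK\alpha)\|\x-\y\|$ (the analogue of Eq.~(9)) together with strong convexity of $\hat{F}_{m+1}$, which yields $2(\|\hat{\g}_m\|+2RK\alpha)^2/(\alpha(mK+T_0))$ per block. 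Taking expectation and invoking Lemma \ref{lemma:expectation_gradient} to control $\E[\|\hat{\g}_m\|^2]\leq K(nM/\delta)^2+K^2G^2$, then summing via $\sum_m1/(mK+T_0)\leq(1+\ln T)/K$, produces the main $(4/\alpha)(nM/\delta+\sqrt{K}(G+2R\alpha))^2(1+\ln T)$ term; the $2\alpha R^2T_0$ tail is the Lemma \ref{lemma:FTL_BTL} initialisation term after $\|\hat{\x}^*-\x_1\|\leq 2R$.

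Finally, I would bound the leftover $\E[\hat{\g}_m^\top(\x_{m-1}-\x_m^*)]$ by conditioning on $\mathcal{H}_{m-1}$: Lemma \ref{lemma:bertsekas_grdient} gives $\|\hat{\nabla}_m\|\leq KG$, so the term is at most $KG\cdot\E[\|\x_{m-1}-\x_m^*\|]$. Splitting $\x_{m-1}-\x_m^*=(\x_{m-1}-\x_{m-1}^*)+(\x_{m-1}^*-\x_m^*)$, the hypothesis $\hat{F}_{m-1}(\x_{m-1})-\hat{F}_{m-1}(\x_{m-1}^*)\leq\epsilon_{m-1}$ combined with strong convexity gives $\|\x_{m-1}-\x_{m-1}^*\|\leq\sqrt{2\epsilon_{m-1}/(\alpha((m-2)K+T_0))}$, while the second piece is the drift bound already established above; after a harmless index shift this yields the final $\sqrt{2/\alpha}\,GK\sum_m\sqrt{\epsilon_m/((m-1)K+T_0)}$ term. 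The hard part will be the bookkeeping: carefully isolating $\mathcal{H}_{m-1}$-measurable quantities before conditioning, tracking the shift-by-one between ``block $m$ plays $\x_{m-1}$'' and ``$\x_m$ approximates $\x_m^*$'', and applying Jensen/Cauchy--Schwarz to push expectations past $\sqrt{\cdot}$'s without inflating the $\sqrt{K}$ dependence supplied by Lemma \ref{lemma:expectation_gradient}.
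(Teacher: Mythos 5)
Your proposal is correct and follows essentially the same route as the paper's proof: the same three-way regret decomposition via smoothed losses and the shrunk comparator $\tilde{\x}^*$, the same split of $\x_{m-1}-\tilde{\x}^*$ into the approximation error to $\x_{m-1}^*$, the RFTL drift $\x_{m-1}^*-\x_m^*$, and the FTL--BTL term over the $g_m$'s, with Lemma \ref{lemma:expectation_gradient} controlling $\E\Vert\hat{\g}_m\Vert$ and conditioning on the block history justifying the exchange between $\hat{\g}_m$ and the true smoothed gradients. The only cosmetic deviation is that you substitute $\hat{\g}_m$ for $\hat{\nabla}_m$ globally and then re-condition to recover the $KG$ bound on the leftover term (the paper substitutes only in the comparator term), which is valid since $\x_{m-1}$, $\x_{m-1}^*$, and $\x_m^*$ are all measurable with respect to the history prior to block $m$.
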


The following lemma is analogues to the use of Lemma \ref{lemma:epsilon_gap} in the full-information setting, and
is used to upper-bound the number of iterations required by the Frank-Wolfe method, Algorithm \ref{alg:Cg}, to terminate on each invocation from  Algorithm \ref{alg:Bandit}. A proof is given in the appendix.

\begin{lemma}\label{lemma:cg_epsilon_error_L_iterations}
     Let $\epsilon > 0$. Given a function $F(\mathbf{x})$, $2\beta$-smooth, and $\mathbf{x}_{1} \in \mathcal{K}_{\delta}$ such that $h_1 := F(\mathbf{x}_{1}) - F(\mathbf{x}^*)$, where $\mathbf{x}^* = \argmin\limits_{\mathbf{x} \in \mathcal{K}_{\delta}}F(\mathbf{x}) $, Algorithm \ref{alg:Cg} produces a point $\mathbf{x}_{L+1}\in\mK_{\delta}$ such that $F(\mathbf{x}_{L+1}) - F(\mathbf{x}^*) \leq \epsilon$ after at most $L = \max \bigg{\{} \frac{4 \beta (2R)^2}{\epsilon^2} (h_1 - \epsilon) , 
    ~ \frac{2}{\epsilon} (h_1 - \epsilon) \bigg{\}}$ iterations.
\end{lemma}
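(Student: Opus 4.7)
The proof is a direct Frank--Wolfe convergence argument using only the $2\beta$-smoothness of $F$ and the exact line search prescribed in Algorithm \ref{alg:Cg}. For brevity, write $g_\tau := \nabla F(\mathbf{z}_\tau)^\top (\mathbf{z}_\tau - \mathbf{v}_\tau)$ (the Frank--Wolfe gap) and $h_\tau := F(\mathbf{z}_\tau) - F(\mathbf{x}^*)$. The first step is to extract a per-iteration descent bound. Because $F$ is $2\beta$-smooth and $\|\mathbf{v}_\tau - \mathbf{z}_\tau\| \leq 2R$, for every $\sigma \in [0,1]$
\[
F(\mathbf{z}_\tau + \sigma(\mathbf{v}_\tau - \mathbf{z}_\tau)) \;\leq\; F(\mathbf{z}_\tau) - \sigma g_\tau + \beta (2R)^2 \sigma^2.
\]
Since $\sigma_\tau$ is chosen by exact line search, the decrease realized by the algorithm is at least the one attained by any fixed step, and in particular by $\hat\sigma_\tau := \min\{1,\, g_\tau/(2\beta(2R)^2)\}$. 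A short case analysis, distinguishing whether $\hat\sigma_\tau$ is the interior minimizer or the truncated value $1$, yields
\[
F(\mathbf{z}_{\tau+1}) - F(\mathbf{z}_\tau) \;\leq\; -\min\bigl\{\tfrac{g_\tau}{2},\ \tfrac{g_\tau^2}{4\beta(2R)^2}\bigr\}.
\]
Taking $\sigma=0$ in the display also shows that $F(\mathbf{z}_\tau)$ is non-increasing (so $h_\tau$ is non-increasing), and since each $\mathbf{z}_\tau$ is a convex combination of the previous iterate and an oracle output, all $\mathbf{z}_\tau$ lie in the feasible set.

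The second ingredient is the standard observation that the Frank--Wolfe gap upper-bounds suboptimality: because $\mathbf{x}^*$ is feasible for the linear subproblem that defines $\mathbf{v}_\tau$, convexity of $F$ gives $h_\tau \leq \nabla F(\mathbf{z}_\tau)^\top(\mathbf{z}_\tau - \mathbf{x}^*) \leq g_\tau$. Now define $\eta := \min\{\epsilon/2,\ \epsilon^2/(4\beta(2R)^2)\}$, so that the lemma's value of $L$ equals $(h_1 - \epsilon)/\eta$. I would then argue by contradiction: suppose $h_{L+1} > \epsilon$. By the monotonicity established above, $h_\tau \geq h_{L+1} > \epsilon$ for all $\tau \in \{1,\ldots,L+1\}$, and by the gap--suboptimality inequality $g_\tau > \epsilon$ at each such iteration. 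The per-iteration descent bound then guarantees $F(\mathbf{z}_{\tau+1}) - F(\mathbf{z}_\tau) \leq -\eta$ throughout, so telescoping over $\tau = 1,\ldots,L$ yields $h_{L+1} \leq h_1 - L\eta = \epsilon$, contradicting the assumption. Hence $h_{L+1} \leq \epsilon$, which is precisely the claim.

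There is no deep obstacle here; the only points that require a little care are the two-case analysis underlying the $\min\{g_\tau/2,\, g_\tau^2/(4\beta(2R)^2)\}$ descent bound, and the observation that the factor $(h_1 - \epsilon)$ (rather than $h_1$) comes out of the contradiction argument via monotonicity, not from a direct telescoping sum. It is worth noting that the analysis never directly invokes the stopping rule of Algorithm \ref{alg:Cg}: after $L$ executions of the body, the iterate is $\epsilon$-suboptimal regardless of whether the algorithm would have halted earlier, which is exactly the form needed when plugging this lemma into the regret analysis of Algorithm \ref{alg:Bandit}.
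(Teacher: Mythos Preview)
Your proof is correct and follows essentially the same route as the paper: the smoothness-based descent inequality with the two-case analysis of $\hat\sigma_\tau$, the bound $h_\tau \leq g_\tau$, and a telescoping sum once $g_\tau > \epsilon$ is secured. The only cosmetic difference is that the paper splits into ``Case 1: the stopping rule fires early'' versus ``Case 2: it does not,'' whereas you package the same argument as a contradiction using monotonicity of $h_\tau$; the mathematical content is identical.
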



\begin{proof}[Proof of Theorem \ref{thm:bandit_main_sc}]
We first upper bound the expected overall number of calls to the linear optimization oracle throughout the run of the algorithm, and then we upper-bound the expected regret.

Let $\mathbf{z}_{m,\tau}$ be the iterate of Algorithm \ref{alg:Cg} after completing $\tau-1$ iterations of the do-while loop, when invoked on iteration (block) $m$ of Algorithm \ref{alg:Bandit}. Also, for all $m,\tau$, define $h_{m,\tau} := \hat{F}_{m}(\mathbf{z}_{m,\tau}) -  \hat{F}_{m}(\mathbf{x}_{m}^*)$. Recall that for any iteration $m$ of Algorithm \ref{alg:Cg}, we have $\mathbf{z}_{m,1} = \mathbf{x}_{m-1}$.

Using the fact $ \hat{F}_{m}(\x_{m+1}^*) \geq \hat{F}_{m}(\x_{m}^*)$, we have
\begin{align*}
    h_{m+1,1} & =  \hat{F}_{m+1}(\mathbf{z}_{m+1,1}) -  \hat{F}_{m+1}(\x_{m+1}^*)  \nonumber \\
    & \leq  \hat{F}_{m}(\x_{m}) -  \hat{F}_{m}(\x_{m}^*) + g_m( \x_{m} ) - g_m( \x_{m+1}^* ) \nonumber \\
      & \leq \epsilon_m +\left( \Vert \hat{\mathbf{g}}_m ^\top \Vert + 2 R K \alpha \right)  \Vert \x_{m} - \x_{m+1}^* \Vert .
\end{align*}
The last inequality is since, $h_{m,L_{m}} = F_{m}(\x_{m}) - F_{m}(\x_{m}^*) \leq \epsilon_m$ and Eq.\eqref{eq:Bandit_Lip_g_tilde_alpha_strongly}.

Using triangle inequality, we have
\begin{align*}
     \mathbb{E}[h_{m+1,1}] \leq & \epsilon_{m} + \mathbb{E} \left[ \left(\Vert \hat{\mathbf{g}}_m \Vert  + 2 R K \alpha \right) \Vert \mathbf{x}_{m} - \mathbf{x}_{m}^* \Vert  \right] \\
     & + \mathbb{E} \left[ \left(\Vert \hat{\mathbf{g}}_m \Vert  + 2 R K \alpha \right)  \Vert \mathbf{x}_{m}^* - \mathbf{x}_{m+1}^* \Vert \right].
\end{align*}

Since $h_{m,L_{m}} \leq \epsilon_{m}$, using Eq. \eqref{eq:strong_convexity}, we have that $ \Vert \mathbf{x}_{m} - \mathbf{x}_{m}^* \Vert \leq \sqrt{\frac{2 \epsilon_{m} }{((m-1) K + T_0)\alpha}}  $. Also, from Eq. \eqref{eq:bandit_absolut_dist_sc}, we have $ \Vert \mathbf{x}_{m}^* - \mathbf{x}_{m+1}^* \Vert \leq \frac{2}{(m K + T_0)\alpha} \left( \Vert \hat{\mathbf{g}}_m \Vert   + 2 R K \alpha \right) $. Thus, we have
\begin{align}
    \mathbb{E}[h_{m+1,1}] \leq &  \epsilon_{m} + \frac{\sqrt{2 \epsilon_{m}}\left(\mathbb{E} \left[ \Vert \hat{\mathbf{g}}_m \Vert  \right] + 2 R K \alpha \right)}{\sqrt{\alpha((m-1) K + T_0)}}  \nonumber \\
    & + \frac{2 \mathbb{E} \left[  \Vert \hat{\mathbf{g}}_m \Vert   + 2 R K \alpha  \right]^2}{\alpha(m K + T_0)} . \label{eq:Bandit_upper_bound_h_1_cg_lemma_sc}
\end{align}

Using Lemma \ref{lemma:cg_epsilon_error_L_iterations} with $h_1 = h_{m,1}$, then we have
{
\begin{align*}
    \mathbb{E}[h_1] \leq & \epsilon_{m-1} + \frac{\sqrt{2 \epsilon_{m-1}} \left(\mathbb{E} \left[ \Vert \hat{\mathbf{g}}_{m-1} \Vert  \right] + 2 R K \alpha \right)}{\sqrt{\alpha\left((m-2) K + T_0\right)}} \\
    & + \frac{2 \mathbb{E} \left[  \Vert \hat{\mathbf{g}}_{m-1} \Vert   + 2 R K \alpha  \right]^2 }{\alpha\left((m-1) K + T_0\right)}. 
\end{align*}}

for $m = 1, \dots, \frac{T}{K}$, we have that on each iteration (block) $m$, the number of calls to the linear optimization oracle is $L_m \leq \max \bigg{\{} \frac{2 \beta_{m} (2R)^2}{\epsilon_{m}^2} (h_{m,1} - \epsilon_{m}) ,~ \frac{2}{\epsilon_{m}} (h_{m,1} - \epsilon_{m}) \bigg{\}}$, when $\hat{F}_{m}(\x)$ is $2\beta_m$-Smooth, note that $\beta_m = \alpha(mK+T_0)$. Since $\epsilon_{m} = 16R^2\beta_{m}^{\frac{1}{3}}$ and $T_0 \geq \frac{8}{\alpha}$ for any $m$ we have $\beta_{m} (2R)^2 > \epsilon_{m}$, then we have $ L_{m}\leq \frac{2 \beta_{m} (2R)^2}{\epsilon_{m}^2} (h_{m,1} - \epsilon_{m})$.
 Following Eq. \eqref{eq:Bandit_upper_bound_h_1_cg_lemma_sc}, Lemma \ref{lemma:expectation_gradient} and plugging in $\epsilon_{m}$, we have 
{
\begin{align*}
    \mathbb{E}[L_{m}]  \leq & \frac{1}{8 R^2}  \frac{ \beta_{m}^\frac{1}{3} }{4 } \left( \mathbb{E}[h_{m,1}] - \epsilon_{m} \right)  \\
    \leq & \frac{\sqrt{2}}{8 R}  \frac{ \beta_{m}^\frac{1}{2} }{\beta_{m-2}^\frac{1}{2}} \left(\frac{\sqrt{K} n M}{\delta} + K G + 2 R K \alpha \right) \\
    &  + \frac{  \beta_{m}^\frac{1}{3} }{16 R^2 \beta_{m-1}}  \left(\frac{\sqrt{K} n M}{\delta} + K G + 2 R K \alpha \right)^2. 
\end{align*}}

Since for every $m = 0, \dots, \frac{T}{K}$, $\beta_{m} = \frac{\alpha (mK +T_0)}{2}$ and $T_0 \geq 4K$, it holds that $\sum_{m=1}^{\frac{T}{K}} \left( \beta_{m}^{\frac{1}{2}} / \beta_{m-2}^{\frac{1}{2}} \right) \leq \sqrt{2} \frac{T}{K}$ and $ \sum_{m=2}^{\frac{T}{K}} \frac{\left(\beta_{m}\right)^{\frac{1} {3}}} {\beta_{m-1}} \leq \frac{4}{\alpha^\frac{2}{3}} \frac{ T^{\frac{1}{3}}}{K}$. Then, overall on all blocks, we obtain 
{
\begin{align*}
    & \mathbb{E} \left[ \sum_{m=1}^{\frac{T}{K}} L_m \right] \leq \frac{1}{4 R} \frac{T}{ K} \left(\frac{\sqrt{K} n M}{\delta} + K G + 2 R K \alpha \right) \nonumber \\
    & ~~~~~~~~~~~~ +  \frac{1}{4 R^2 \alpha^{\frac{2}{3}}} \frac{ T^{\frac{1}{3}}}{K}  \left(\frac{\sqrt{K} n M}{\delta} + K G + 2 R K \alpha \right)^2    \nonumber \\
    & \underset{(d)}{\leq} ~ \left(\frac{ n M}{c 4 R } + \frac{G}{4 R} +  \frac{\alpha}{2} \right) T + \frac{1}{ \alpha^{\frac{2}{3}}} \left(\frac{ n M}{c 2 R} + \frac{G}{2 R} + \alpha \right)^2 T.
\end{align*}}
Inequality (d) is due to plugging-in $\delta, K$.

We now turn to upper-bound the expected regret of the algorithm. using Lemma \ref{lemma:Bandit_RFTL_sc}, plugging-in $\epsilon_m$ and the fact that $\sum_{m=1}^{\frac{T}{K}} \sqrt{\frac{ \left(\beta_{m}\right)^{\frac{1}{3}}}{\beta_{m-1}}} \leq \frac{2}{\alpha^{\frac{1}{3}}} \frac{T^{\frac{2}{3}}}{K}$, we obtain
{
\begin{align*}
    \mathbb{E}[\mathcal{R}_{T}] \leq & 3 \delta G T + \delta \frac{R}{r} G T  + 2 \alpha R^2 T_0  +  \frac{2\sqrt{2}}{\alpha^{\frac{1}{3}}} G T^{\frac{2}{3}} \nonumber \\
    & + \frac{4}{\alpha} \left(  \frac{nM}{\delta} + \sqrt{K} (G + 2 R \alpha) \right)^2 (1 + \ln{(T)}) \nonumber \\
    \underset{(d)}{\leq} & \left( 3 c G + c \frac{R}{r} G + 2 R^2 \alpha + \frac{2\sqrt{2}G}{\alpha^{\frac{1}{3}}} \right) T^{\frac{2}{3}} \nonumber \\ 
    & + \frac{4}{\alpha} \left( \frac{nM}{c }  +  G + 2 R \alpha \right)^2  T^{\frac{2}{3}} (1 + \ln{(T)}).
\end{align*}}
Equality (d) is due to plugging-in $\delta, K, T_0$.

\end{proof}

\section{DISCUSSION}
We have proved strong convexity leads to faster rates for projection-free online learning both in the full-information and bandit settings.  This is obtained via the standard Online Frank-Wolfe method \cite{Hazan12} and its bandit variant with blocks \cite{garber2020improved}. In particular, this improvement is in stark contrast to the state-of-affairs in offline convex optimizaiton, in which, in general, strong convexity does not lead to faster rates for the Frank-Wolfe method. 

In light of this current work and the recent work, \cite{HazanM20}, it is interesting if the combination of smoothness and strong convexity can lead to a faster rate than $O(T^{2/3})$. Also, given the recent interest in projection-free online learning, and in particular with a linear optimization oracle, it would be interesting to derive lower bounds on the regret of such algorithms, perhaps with online variants of standard constructions used in the offline setting (see for instance  \cite{Jaggi13,  GH16}).

\section{Acknowledgments}
This research was supported by the ISRAEL SCIENCE FOUNDATION (grant No. 1108/18).

\bibliography{bib}

\begin{thebibliography}{10}

\bibitem{bellet2015distributed}
Aur{\'e}lien Bellet, Yingyu Liang, Alireza~Bagheri Garakani, Maria-Florina
  Balcan, and Fei Sha.
\newblock A distributed frank-wolfe algorithm for communication-efficient
  sparse learning.
\newblock In {\em Proceedings of the 2015 SIAM International Conference on Data
  Mining}, pages 478--486. SIAM, 2015.

\bibitem{Bertsekas73}
Dimitri~P Bertsekas.
\newblock Stochastic optimization problems with nondifferentiable cost
  functionals.
\newblock {\em Journal of Optimization Theory and Applications},
  12(2):218--231, 1973.

\bibitem{Cesa06}
Nicolo Cesa-Bianchi and Gabor Lugosi.
\newblock {\em Prediction, learning, and games}.
\newblock Cambridge university press, 2006.

\bibitem{Karbasi19}
Lin Chen, Mingrui Zhang, and Amin Karbasi.
\newblock Projection-free bandit convex optimization.
\newblock In {\em The 22nd International Conference on Artificial Intelligence
  and Statistics, {AISTATS} 2019, 16-18 April 2019, Naha, Okinawa, Japan},
  pages 2047--2056, 2019.

\bibitem{Flaxman05}
Abraham~D Flaxman, Adam~Tauman Kalai, Adam~Tauman Kalai, and H~Brendan McMahan.
\newblock Online convex optimization in the bandit setting: gradient descent
  without a gradient.
\newblock In {\em Proceedings of the sixteenth annual ACM-SIAM symposium on
  Discrete algorithms}, pages 385--394. Society for Industrial and Applied
  Mathematics, 2005.

\bibitem{FrankWolfe}
M.~Frank and P.~Wolfe.
\newblock An algorithm for quadratic programming.
\newblock {\em Naval Research Logistics Quarterly}, 3:149--154, 1956.

\bibitem{Garber13}
Dan Garber and Elad Hazan.
\newblock Playing non-linear games with linear oracles.
\newblock In {\em 54th Annual {IEEE} Symposium on Foundations of Computer
  Science, {FOCS}}, 2013.

\bibitem{GH15}
Dan Garber and Elad Hazan.
\newblock Faster rates for the frank-wolfe method over strongly-convex sets.
\newblock In {\em Proceedings of the 32nd International Conference on Machine
  Learning, {ICML} 2015, Lille, France, 6-11 July 2015}, pages 541--549, 2015.

\bibitem{GH16}
Dan Garber and Elad Hazan.
\newblock A linearly convergent variant of the conditional gradient algorithm
  under strong convexity, with applications to online and stochastic
  optimization.
\newblock {\em SIAM Journal on Optimization}, 26(3):1493--1528, 2016.

\bibitem{garber2019fast}
Dan Garber and Atara Kaplan.
\newblock Fast stochastic algorithms for low-rank and nonsmooth matrix
  problems.
\newblock In {\em The 22nd International Conference on Artificial Intelligence
  and Statistics}, pages 286--294. PMLR, 2019.

\bibitem{garber2020improved}
Dan Garber and Ben Kretzu.
\newblock Improved regret bounds for projection-free bandit convex
  optimization.
\newblock In {\em International Conference on Artificial Intelligence and
  Statistics}, pages 2196--2206. PMLR, 2020.

\bibitem{goldfarb2017linear}
Donald Goldfarb, Garud Iyengar, and Chaoxu Zhou.
\newblock Linear convergence of stochastic frank wolfe variants.
\newblock In {\em Artificial Intelligence and Statistics}, pages 1066--1074,
  2017.

\bibitem{HazanBook}
Elad Hazan.
\newblock Introduction to online convex optimization.
\newblock {\em Foundations and Trends in Optimization}, 2(3-4):157--325, 2016.

\bibitem{HazanKKA06}
Elad Hazan, Adam Kalai, Satyen Kale, and Amit Agarwal.
\newblock Logarithmic regret algorithms for online convex optimization.
\newblock In G{\'{a}}bor Lugosi and Hans~Ulrich Simon, editors, {\em Learning
  Theory, 19th Annual Conference on Learning Theory, {COLT} 2006, Pittsburgh,
  PA, USA, June 22-25, 2006, Proceedings}, volume 4005 of {\em Lecture Notes in
  Computer Science}, pages 499--513. Springer, 2006.

\bibitem{hazan2016variance}
Elad Hazan and Haipeng Luo.
\newblock Variance-reduced and projection-free stochastic optimization.
\newblock In {\em International Conference on Machine Learning}, pages
  1263--1271, 2016.

\bibitem{HazanM20}
Elad Hazan and Edgar Minasyan.
\newblock Faster projection-free online learning.
\newblock In Jacob~D. Abernethy and Shivani Agarwal, editors, {\em Conference
  on Learning Theory, {COLT} 2020, 9-12 July 2020, Virtual Event [Graz,
  Austria]}, volume 125 of {\em Proceedings of Machine Learning Research},
  pages 1877--1893. {PMLR}, 2020.

\bibitem{Hazan12}
Elad~E Hazan and Satyen Kale.
\newblock Projection-free online learning.
\newblock In {\em 29th International Conference on Machine Learning, ICML
  2012}, pages 521--528, 2012.

\bibitem{Jaggi13}
Martin Jaggi.
\newblock Revisiting frank-wolfe: Projection-free sparse convex optimization.
\newblock In {\em ICML (1)}, pages 427--435, 2013.

\bibitem{lacoste2015linear_fw}
Simon Lacoste-Julien and Martin Jaggi.
\newblock On the global linear convergence of {F}rank-{W}olfe optimization
  variants.
\newblock In {\em Advances in Neural Information Processing Systems}, pages
  496--504, 2015.

\bibitem{Polyak}
Evgeny~S Levitin and Boris~T Polyak.
\newblock Constrained minimization methods.
\newblock {\em USSR Computational mathematics and mathematical physics},
  6:1--50, 1966.

\bibitem{wang2016parallel}
Yu-Xiang Wang, Veeranjaneyulu Sadhanala, Wei Dai, Willie Neiswanger, Suvrit
  Sra, and Eric Xing.
\newblock Parallel and distributed block-coordinate frank-wolfe algorithms.
\newblock In {\em International Conference on Machine Learning}, pages
  1548--1557, 2016.

\end{thebibliography}
\bibliographystyle{plain}
\newpage

\onecolumn
\appendix

\section{MISSING PROOFS}

 \subsection{Proof of Lemma \ref{lemma:FTL_BTL}}
 We first restate the lemma and then prove it.\\
 \emph{Lemma} \ref{lemma:FTL_BTL}.
 Let $\mK \subseteq \reals^d$ a convex and compact set, $\{ g_m (\x)  \}_{m=1}^{T}$ a set of convex functions, $\x_1 \in \reals^d$ and $c_1 \in \reals^+$. Denote $\x_{\tau}^* = \argmin\limits_{\x \in \mK} \Big{\{}  \sum_{m=1}^{\tau-1} g_m (\x) +  c_1 \Vert \x - \x_1 \Vert^2 \Big{\}} $  for every $\tau \in [1, T+1]$. Then for every $\x \in \mK$ we have that
 \begin{align*}
     \sum_{m=1}^{T} \big( g_m( \x_{m}^*) - g_m( \x) \big) \leq & \sum_{m=1}^{T} \big(g_m( \x_{m}^*) - g_m( \x_{m+1}^*) \big)  + c_1 \Vert \x - \x_1 \Vert^2.
 \end{align*}

 \begin{proof}
 Define $g_0(\x) = c_1 \Vert \x - \x_1 \Vert^2$. We will show by induction on $\tau$ that for all $\x \in \mK$ the following holds
 \begin{align}
     \sum_{m=0}^{\tau} g_m( \x_{m+1}^*) \leq \sum_{m=0}^{\tau} g_m( \x).
     \label{eq:FTL_BTL}
 \end{align}
 \emph{Induction base}: for $\tau =0$, since $\x_1^* = \x_1$, we have $ g_0( \x_{1}^*) = 0 \leq  c_1 \Vert \x -\x_1 \Vert^2 = g_0( \x)$.\\
 \emph{Induction assumption}: for $\tau = t$ Eq.\eqref{eq:FTL_BTL} holds. \\
 \emph{Induction step}: for $\tau = t+1$ we have that
 \begin{align*}
     \sum_{m=0}^{t+1} g_m( \x_{m+1}^*)  = \sum_{m=0}^{t} g_m( \x_{m+1}^*) + g_{t+1}( \x_{t+2}^*)  & \underset{(i)}{\leq} \sum_{m=0}^{t} g_m( \x_{t+2}^*) + g_{t+1}( \x_{t+2}^*) \leq  \sum_{m=0}^{t+1} g_m( \x_{t+2}^*)   \underset{(ii)}{\leq} \sum_{m=0}^{t+1} g_m( \x),
 \end{align*}
 where inequality (i) is from the induction assumption and inequality (ii) is by definition of $\x_{t+2}^*$.\\
 From Eq.\eqref{eq:FTL_BTL} and the fact that $\x_{1}^{*} = \x_1$ we obtain the inequality.
 \end{proof}

\subsection{Proof of Lemma \ref{lemma:Bandit_RFTL_sc}}

We first restate the lemma and then prove it.\\
\emph{Lemma} \ref{lemma:Bandit_RFTL_sc}.
Let $\{\epsilon_{m}\}_{m=1}^{T/K} > 0$. Suppose that throughout the run of Algorithm \ref{alg:Bandit}, for all blocks $m = 1, \dots, \frac{T}{K}$ it holds that $\hat{F}_{m}(\mathbf{x}_{m}) -  \hat{F}_{m}(\mathbf{x}_{m}^*) \leq \epsilon_{m}$. Then, the expected regret of the algorithm is upper-bounded by
    \begin{align*}
        \mathbb{E}[\mathcal{R}_{T}] \leq & \frac{4}{\alpha} \left(  \frac{nM}{\delta} + \sqrt{K} (G + 2 R \alpha) \right)^2 (1 + \ln{(T)}) + 2 \alpha R^2 T_0 + 3 \delta G T + \delta \frac{R}{r} G T + \sqrt{\frac{2}{\alpha}} G K \sum_{m=1}^{\frac{T}{K}} \sqrt{\frac{\epsilon_{m}}{(m-1) K + T_0}}.
    \end{align*}

\begin{proof}
 It holds that 
\begin{align}
        \mathbb{E}\left[\mathcal{R}_{T}\right] = & \sum_{t=1}^{T} \mathbb{E} \left[f_t(\mathbf{y}_{t})\right]  - \sum_{t=1}^{T} f_t(\mathbf{x}^*) \nonumber \\
        = & \sum_{t=1}^{T} \mathbb{E} \left[f_t(\mathbf{y}_{t})\right] - \sum_{t=1}^{T} \mathbb{E} \left[ \hat{f}_{t,\delta}(\mathbf{x}_{m(t)-1}) \right]   + \sum_{t=1}^{T} \mathbb{E} \left[\hat{f}_{t,\delta}(\mathbf{x}_{m(t)-1}) \right] - \sum_{t=1}^{T} \hat{f}_{t,\delta} (\tilde{\mathbf{x}}^*)  + \sum_{t=1}^{T} \hat{f}_{t,\delta} (\tilde{\mathbf{x}}^*) - \sum_{t=1}^{T} f_t(\mathbf{x}^*). \label{eq:Bandit_full_regret_rftl_lemma_sc}
\end{align}

Since $f_t$ is convex and $G$-Lipschitz, using lemma \ref{lemma:hazan_smooth} and Cauchy-Schwarz inequality, we have that
\begin{align}
    \sum_{t=1}^{T} \mathbb{E} \left[f_t(\mathbf{y}_{t}) - \hat{f}_{t,\delta}(\mathbf{x}_{m(t)-1}) \right]  & = \sum_{t=1}^{T} \mathbb{E} \left[f_t(\mathbf{x}_{m(t)-1} + \delta \mathbf{u}_{t}) - f_t(\mathbf{x}_{m(t)-1}) \right] + \delta G T  \nonumber \\
& \leq \sum_{t=1}^{T} \mathbb{E} \left[G ~  \Vert \delta \mathbf{u}_{t} \Vert \right] + \delta G T \leq 2 \delta G T. \label{eq:Bandit_regret_rftl_1}
\end{align}
Also, we have
\begin{align}
    \sum_{t=1}^{T} \hat{f}_{t,\delta} (\tilde{\mathbf{x}}^*) - f_t(\mathbf{x}^*)   &  = \sum_{t=1}^{T} \hat{f}_{t,\delta} (\tilde{\mathbf{x}}^*) - f_t(\tilde{\mathbf{x}}^*) + f_t(\tilde{\mathbf{x}}^*) - f_t(\mathbf{x}^*)  \leq \delta G T + \sum_{t=1}^{T} G  \Vert  \tilde{\mathbf{x}}^* - \mathbf{x}^*  \Vert \leq \delta G T + \delta \frac{R}{r} G T. \label{eq:Bandit_regret_rftl_2}
\end{align}

Now we need to obtain an upper bound on $\sum_{t=1}^{T} \mathbb{E} \left[ \hat{f}_{t,\delta}(\mathbf{x}_{m(t)-1}) \right] - \sum_{t=1}^{T} \hat{f}_{t,\delta} (\tilde{\mathbf{x}}^*)$. We will first take a few preliminary steps before we get this bound. Define $g_m (\mathbf{x}) = \hat{\mathbf{g}}_m ^\top \mathbf{x} + \frac{K \alpha}{2} \Vert \mathbf{x} - \mathbf{x}_{m-1} \Vert^2$, using Cauchy-Schwarz inequality and Eq.\eqref{eq:gap_of_square_norm}, for all $ \x,\y \in \mK$ we have
\begin{align}
     g_m (\x) - g_m (\y) & \leq  \left( \Vert \hat{\mathbf{g}}_m ^\top \Vert + 2 R K \alpha \right) \Vert \x - \y\Vert. \label{eq:Bandit_Lip_g_tilde_alpha_strongly}
\end{align} 

Since for every $m$, $\hat{F}_{m+1}(\x)$ is $(mK+T_0)\alpha$-strongly convex, using Eq. \eqref{eq:strong_convexity} we have that
\begin{align*}
    \frac{(m K + T_0)\alpha}{2}  \Vert \x_{m}^* -  \x_{m+1}^*\Vert ^2  \leq \hat{F}_{m+1} ( \x_{m}^* ) - \hat{F}_{m+1}(\x_{m+1}^*) & =  \hat{F}_{m} ( \x_{m}^* ) -  \hat{F}_{m}(\x_{m+1}^*) + g_m( \x_{m}^* ) - g_m( \x_{m+1}^* ) \nonumber \\
    & \leq  g_m( \x_{m}^* ) - g_m( \x_{m+1}^* ) .
\end{align*}
The last inequality is since for every $m$,  $\hat{F}_{m}(\x_{m}^*) \leq \hat{F}_{m}(\x_{m+1}^*)$.

From the two last equations, we obtain
\begin{align}
    \Vert\mathbf{x}_{m}^*-\mathbf{x}_{m+1}^*\Vert & \leq 2 \frac{ \Vert \hat{\mathbf{g}}_m ^\top \Vert + 2 R K \alpha }{(m K + T_0)\alpha} . \label{eq:bandit_absolut_dist_sc}
\end{align}

Using Eq. \eqref{eq:Bandit_Lip_g_tilde_alpha_strongly} and Eq. \eqref{eq:bandit_absolut_dist_sc}, we have that
\begin{align}
    \mathbb{E} \left[ \sum_{m=1}^{\frac{T}{K}} g_m( \mathbf{x}_{m}^*) - g_m( \x_{m+1}^*) \right] & \leq  \sum_{m=1}^{\frac{T}{K}} \frac{2}{(m K + T_0)\alpha} \mathbb{E}  \left[  \left( \Vert \hat{\mathbf{g}}_m ^\top \Vert + 2 R K \alpha \right)^2  \right] \nonumber \\
&  \leq \frac{2}{\alpha} \left( \frac{nM}{\delta} + \sqrt{K} (G + 2 R \alpha) \right)^2 (1 + \ln{(T)}) \label{eq:Bandit_Lip_g_tilde_alpha_strongly_1} .
\end{align}

The last inequality is due to Lemma \ref{lemma:expectation_gradient} and the fact that for all $a,b \in \reals^+$ it holds that $\sqrt{a+b} \leq \sqrt{a} + \sqrt{b}$, and also the fact, for any $T_0 \geq 0$, it holds that $ \sum_{m=1}^{ \frac{T}{K} }  \left( \frac{1}{m K + T_0} \right) \leq \frac{1 + \ln{(T)}}{K}$.

Using the $\alpha$-strong convexity of $f_t$ and Lemma \ref{lemma:hazan_smooth}, it holds that $\hat{f}_{t,\delta}$ is $\alpha$-strongly convex. Then, $\hat{f}_{t,\delta}(\mathbf{x}) -  \hat{f}_{t,\delta}(\mathbf{y}) \leq \nabla \hat{f}_{t,\delta}^{\top}(\mathbf{x}) (\mathbf{x} - \mathbf{y}) - \frac{\alpha}{2} \Vert \x - \y \Vert^2$, 
and we have
\begin{align}
     \sum_{t=1}^{T}   \Big( \mathbb{E} &  \left[  \hat{f}_{t,\delta}(\mathbf{x}_{m(t)-1}) \right] -  \hat{f}_{t,\delta}(\tilde{\mathbf{x}}^*) \Big)   \leq \sum_{t=1}^{T} \mathbb{E} \left[ \hat{\nabla}_{t,\delta,m(t)-1}^{\top} (\mathbf{x}_{m(t)-1} - \tilde{\mathbf{x}}^*) - \frac{\alpha}{2} \Vert \mathbf{x}_{m(t)-1} - \tilde{\mathbf{x}}^* \Vert^2 \right] \nonumber \\
    & ~~~~~~ =  \sum_{t=1}^{T} \mathbb{E} \Big[ \hat{\nabla}_{t,\delta,m(t)-1}^{\top} \left( \mathbf{x}_{m(t)-1} - \mathbf{x}_{m(t)-1}^* + \mathbf{x}_{m(t)-1}^* - \mathbf{x}_{m(t)}^*  + \mathbf{x}_{m(t)}^* - \tilde{\mathbf{x}}^* \right) - \frac{\alpha}{2} \Vert \mathbf{x}_{m(t)-1} - \tilde{\mathbf{x}}^* \Vert^2 \Big]  \label{eq:Bandit_regret_rftl_delta_smooth}
\end{align}

Using Lemma \ref{lemma:bertsekas_grdient} we have that for all $t\in[T]$, $\Vert{\nabla {\hat{f}}_{t,\delta} (\mathbf{x}_t)}\Vert \leq G$.
Since for all $m$, $\hat{F}_{m+1}(\mathbf{x})$ is $(m K+T_0)\alpha$-strongly convex, using our assumption that $\hat{F}_{m}(\mathbf{x}_{m}) -  \hat{F}_{m}(\mathbf{x}_{m}^*) \leq \epsilon_{m}$, we have that 
\begin{align}
     \sum_{t=1}^{T} \mathbb{E} \left[ \nabla \hat{f}_{t,\delta}(\mathbf{x}_{m-1})^{\top}  (\mathbf{x}_{m-1} - \mathbf{x}_{m-1}^*) \right] & \leq \sqrt{\frac{2}{\alpha}} G K \sum_{m=1}^{\frac{T}{K}}  \mathbb{E} \left[ \sqrt{\frac{\hat{F}_{m-1}(\mathbf{x}_{m-1}) - \hat{F}_{m-1}(\mathbf{x}_{m-1}^*)}{(m-2) K + T_0}} \right] \nonumber \\
    & \leq \sqrt{\frac{2}{\alpha}} G K \sum_{m=1}^{\frac{T}{K}} \sqrt{\frac{\epsilon_{m-1}}{(m-2) K + T_0}}. \label{eq:Bandit_cg_regret_rftl_lemma_sc}
\end{align}

Using Eq.\eqref{eq:bandit_absolut_dist_sc} and the fact for any $T_0 \geq 0$ it holds that $ \sum_{m=1}^{ \frac{T}{K} }  \left( \frac{1}{m K + T_0} \right) \leq \frac{1 + \ln{(T)}}{K}$, we have
\begin{align}
    \sum_{t=1}^{T} \mathbb{E} \left[ \nabla \hat{f}_{t,\delta}(\mathbf{x}_{m(t)-1})^{\top} (\mathbf{x}_{m(t)-1}^* - \mathbf{x}_{m(t)}^*) \right] & \leq GK \sum_{m=1}^{\frac{T}{K}} \mathbb{E} \left[ \Vert (\mathbf{x}_{m-1}^* - \mathbf{x}_{m}^*) \Vert \right] \nonumber \\
    & \leq \frac{2G}{\alpha} \left(\mathbb{E} \left[ \Vert \hat{\mathbf{g}}_{m-1} ^\top \Vert  \right] + 2 R K \alpha \right) (1 + \ln{(T)})  \nonumber \\
    & \leq  \frac{2 G}{\alpha} \left( \sqrt{K} \frac{n M }{ \delta} + K \left(G  + 2 R \alpha \right) \right) (1 + \ln{(T)}) . \label{eq:Bandit_osb_regret_rftl_lemma_sc}
\end{align}
The last inequality is due to Lemma \ref{lemma:expectation_gradient} and the fact that for all $a,b \in \reals^+$ it holds that $\sqrt{a+b} \leq \sqrt{a} + \sqrt{b}$.

Now, it remains to obtain an upper bound on $\sum_{t=1}^{T} \mathbb{E}  \Big[  \hat{\nabla}_{t,\delta,m(t)-1}^{\top}  (\mathbf{x}_{m(t)}^* - \tilde{\mathbf{x}}^*) - \frac{\alpha}{2} \Vert \mathbf{x}_{m(t)-1} - \tilde{\mathbf{x}}^* \Vert^2 \Big]$.

Define for all $m \in \left[\frac{T}{K}\right]$ $\mathcal{F}_m = \{ \mathbf{x}_1, \hat{\mathbf{g}}_1, \dots, \mathbf{x}_{m-1}, \hat{\mathbf{g}}_{m-1} \} $- the history of all predictions and gradient estimates. Throughout the sequel we introduce the short notation $ \hat{\nabla}_{t,\delta,m(t)-1} = \nabla {\hat{f}}_{t,\delta}(\mathbf{x}_{m(t)-1})$.
Using Lemma \ref{lemma:hazan_gradient}, it holds that $\mathbf{g}_t$ is an unbiased estimator of $\hat{\nabla}_{t,\delta,m(t)-1}$, then $\mathbb{E} \left[ \mathbf{g}_t | \mathcal{F}_m \right] = \hat{\nabla}_{t,\delta,m(t)-1}$. Since  $\mathbf{x}_m^* = \argmin\limits_{\mathbf{x} \in \mathcal{K}_{\delta}} \Big{\{} \hat{F}_{m}(\mathbf{x}) := \sum_{i=1}^{m-1}  g_i(\x)  + T_0 \frac{\alpha}{2} \Vert\mathbf{x} - \mathbf{x}_1\Vert^2 \Big{\}} $, when $ \{ g_i (\mathbf{x}) = \hat{\mathbf{g}}_i ^\top \mathbf{x} + \frac{K \alpha}{2} \Vert \mathbf{x} - \mathbf{x}_{i-1} \Vert^2 \}_{i=1}^{m}$, we have that $\mathbb{E} \left[ \mathbf{x}_m^*| \mathcal{F}_m \right] = \mathbf{x}_m^*$. From both observations $\forall \mathbf{x} \in \mathcal{K}_{\delta}$ and $\forall m \in \left[\frac{T}{K}\right] $, it holds that
\begin{align*}
    \mathbb{E}   \left[ \hat{\mathbf{g}}_m^{\top}  (\mathbf{x}_{m}^* - \mathbf{x}) \right] =  \mathbb{E} \left[ \mathbb{E} \left[ \hat{\mathbf{g}}_m| \mathcal{F}_{m} \right] ^{\top}  (\mathbf{x}_{m}^* - \mathbf{x}) \right] & = \mathbb{E} \left[ \sum_{t=(m-1)K+1}^{mK} \mathbb{E} \left[  \mathbf{g}_t| \mathcal{F}_{m(t)} \right] ^{\top}  (\mathbf{x}_{m(t)}^* - \mathbf{x}) \right] \nonumber \\
    & = \sum_{t=(m-1)K+1}^{mK}  \mathbb{E} \left[ \hat{\nabla}_{t,\delta,m(t)-1}^{\top} (\mathbf{x}_{m(t)}^* - \mathbf{x}) \right]. 
\end{align*}

Then, from the definition of $g_m(\x)$, and the fact that $\frac{\alpha}{2} \Vert \mathbf{x}_{m-1} - \x_m^* \Vert^2 \geq 0$, we have 
\begin{align}
    \sum_{t=1}^{T} \mathbb{E}  \Big[  \hat{\nabla}_{t,\delta,m(t)-1}^{\top}  (\mathbf{x}_{m(t)}^* - \tilde{\mathbf{x}}^*) - \frac{\alpha}{2} \Vert \mathbf{x}_{m(t)-1} - \tilde{\mathbf{x}}^* \Vert^2 \Big] & \leq  \mathbb{E} \left[ \sum_{m=1}^{\frac{T}{K}} g_m( \mathbf{x}_{m}^*) - g_m( \tilde{\x}^*) \right] \label{eq:Bandit_first_observation}.
\end{align}

Using Lemma \ref{lemma:FTL_BTL}, for all $\mathbf{x} \in \mathcal{K}_{\delta}$, we have that
\begin{align*}
    \sum_{m=1}^{\frac{T}{K}} g_m( \mathbf{x}_{m}^*) - g_m( \mathbf{x}) \leq & \sum_{m=1}^{\frac{T}{K}} g_m( \mathbf{x}_{m}^*) - g_m( \mathbf{x}_{m+1}^*)  + \frac{T_0 \alpha}{ 2} \Vert\mathbf{x} - \mathbf{x}_1\Vert^2. 
\end{align*}

Then, from Eq.\eqref{eq:Bandit_first_observation}, we have 
\begin{align}
     \sum_{t=1}^{T} \mathbb{E}  \Big[  \hat{\nabla}_{t,\delta,m(t)-1}^{\top}  (\mathbf{x}_{m(t)}^* - \tilde{\mathbf{x}}^*) - \frac{\alpha}{2} \Vert \mathbf{x}_{m(t)-1} - \tilde{\mathbf{x}}^* \Vert^2 \Big] & \leq \mathbb{E} \left[ \sum_{m=1}^{\frac{T}{K}} g_m( \mathbf{x}_{m}^*) - g_m( \x_{m+1}^*) \right] + ~ \frac{T_0 \alpha}{ 2 } \Vert\tilde{\mathbf{x}}^* - \mathbf{x}_1\Vert^2 \nonumber \\
    & \leq \frac{2}{\alpha} \left( \frac{nM}{\delta}+ \sqrt{K} (G + 2 R \alpha) \right)^2 (1 + \ln{(T)}) +  2 \alpha R^2 T_0. \label{eq:Bandit_rftl_regret_rftl_lemma_sc}
\end{align}
The last inequality is from Eq.\eqref{eq:Bandit_Lip_g_tilde_alpha_strongly_1}.
Combining the results of Eq. \eqref{eq:Bandit_rftl_regret_rftl_lemma_sc},  \eqref{eq:Bandit_cg_regret_rftl_lemma_sc} and  \eqref{eq:Bandit_osb_regret_rftl_lemma_sc} in Eq.\eqref{eq:Bandit_regret_rftl_delta_smooth}, we obtain
 \begin{align*}
    \sum_{t=1}^{T}   \Big( \mathbb{E}  \left[  \hat{f}_{t,\delta}(\mathbf{x}_{m(t)-1}) \right]  -  \hat{f}_{t,\delta}(\tilde{\mathbf{x}}^*) \Big) \leq &  \frac{4}{\alpha} \left(  \frac{nM}{\delta} + \sqrt{K} (G + 2 R \alpha) \right)^2 (1 + \ln{(T)})+  2 \alpha R^2 T_0 \nonumber \\
& +  \sqrt{\frac{2}{\alpha}} G K \sum_{m=1}^{\frac{T}{K}} \sqrt{\frac{\epsilon_{m}}{(m-1) K + T_0}} . 
\end{align*}

Combining the last result with Eq. \eqref{eq:Bandit_regret_rftl_1} and Eq.\eqref{eq:Bandit_regret_rftl_2} in Eq.\eqref{eq:Bandit_full_regret_rftl_lemma_sc}, we obtain the required bound.
\end{proof}

\subsection{Proof of Lemma \ref{lemma:cg_epsilon_error_L_iterations}}

We first restate the lemma and then prove it.\\
\emph{Lemma} \ref{lemma:cg_epsilon_error_L_iterations}.
 Let $\epsilon > 0$. Given a function $F(\mathbf{x})$, $2\beta$-smooth, and $\mathbf{x}_{1} \in \mathcal{K}_{\delta}$ such that $h_1 := F(\mathbf{x}_{1}) - F(\mathbf{x}^*)$, where $\mathbf{x}^* = \argmin\limits_{\mathbf{x} \in \mathcal{K}_{\delta}}F(\mathbf{x}) $, Algorithm \ref{alg:Cg} produces a point $\mathbf{x}_{L+1}\in\mK_{\delta}$ such that $F(\mathbf{x}_{L+1}) - F(\mathbf{x}^*) \leq \epsilon$ after at most $L = \max \bigg{\{} \frac{4 \beta (2R)^2}{\epsilon^2} (h_1 - \epsilon) , 
    ~ \frac{2}{\epsilon} (h_1 - \epsilon) \bigg{\}}$ iterations.

\begin{proof}
    For any iteration $\tau$ of Algorithm \ref{alg:Cg}, define $h_{\tau} :=  F(\mathbf{x}_{\tau}) - F(\mathbf{x}^*)$ and denote $\nabla_{\tau} := \nabla F(\mathbf{x}_{\tau})$. From the choice of $\v_{\tau}$ and the convexity of $F(\cdot)$, it follows that
    \begin{align}
         h_\tau = F(\mathbf{x}_{\tau}) - F(\mathbf{x}^*)  \leq \nabla_{\tau}^{\top} (\mathbf{x}_{\tau} - \mathbf{x}^*) \leq \nabla_{\tau}^{\top} (\mathbf{x}_{\tau} - \mathbf{v}_{\tau}). \label{eq:aux1_cg_lemma}
    \end{align}
    We now establish the convergence rate of Algorithm \ref{alg:Cg}. It holds that
    \begin{align*}
        h_{\tau+1} = F(\mathbf{x}_{\tau+1}) -  F(\mathbf{x}^*)  = F(\mathbf{x}_{\tau} + \sigma_{\tau}(\mathbf{v}_{\tau} - \mathbf{x}_{\tau})) - F(\mathbf{x}^*).
    \end{align*}
    For our analysis we define $\hat{\sigma}_{\tau} = \min \Big{\{} \frac{ \nabla_{\tau}^{\top} (\mathbf{x}_{\tau} - \mathbf{v}_{\tau})}{2 \beta (2R)^2}, 1 \Big{\}}$.
    Since $\sigma_{\tau}$ is chosen via line-search, we have that
    \begin{align*}
        h_{\tau+1} & =  F(\mathbf{x}_{\tau} + \sigma_{\tau}(\mathbf{v}_{\tau} - \mathbf{x}_{\tau})) - F(\mathbf{x}^*)  \leq F(\mathbf{x}_{\tau} + \hat{\sigma}_{\tau}(\mathbf{v}_{\tau} - \mathbf{x}_{\tau})) - F(\mathbf{x}^*). 
    \end{align*}
    Since $F(\mathbf{x})$ is $2\beta$-smooth, it holds that 
    \begin{align*}
        F(\mathbf{x}_{\tau} & + \hat{\sigma}_{\tau}(\mathbf{v}_{\tau} - \mathbf{x}_{\tau})) \leq  F(\mathbf{x}_{\tau}) + \hat{\sigma}_{\tau} \nabla_{\tau}^{\top} (\mathbf{v}_{\tau} - \mathbf{x}_{\tau}) + \beta \hat{\sigma}_{\tau}^2  \Vert \mathbf{v}_{\tau} - \mathbf{x}_{\tau} \Vert ^2, 
    \end{align*}
    and we obtain
    \begin{align}
        h_{\tau+1} \leq & h_{\tau} + \hat{\sigma}_{\tau}^2 \beta (2R)^2 - \hat{\sigma}_{\tau} \nabla_{\tau}^{\top} (\mathbf{x}_{\tau} - \mathbf{v}_{\tau}).   \nonumber
    \end{align}
    We now consider several cases.
    \\ \emph{Case 1}: If $\nabla_{\tau}^{\top} (\mathbf{x}_{\tau} - \mathbf{v}_{\tau}) \leq \epsilon$ for some $\tau < L$, the algorithm will stop after less than $L$ iterations. Moreover, from Eq. \eqref{eq:aux1_cg_lemma} we have $h_{\tau} \leq \epsilon$.
    \\ \emph{Case 2}: Else, $\nabla_{\tau}^{\top} (\mathbf{x}_{\tau} - \mathbf{v}_{\tau}) \geq \epsilon$ for all $\tau < L$. We have two cases: \\
    \emph{Case 2.1}: If $\nabla_{\tau}^{\top} (\mathbf{x}_{\tau} - \mathbf{v}_{\tau}) \geq 2 \beta (2R)^2$ then $\hat{\sigma}_{\tau} = 1$ and we have
    \begin{align*}
        h_{\tau+1} & \leq  h_{\tau} + \hat{\sigma}_{\tau}^2 \beta (2R)^2 - \hat{\sigma}_{\tau} \nabla_{\tau}^{\top} (\mathbf{x}_{\tau} - \mathbf{v}_{\tau}) \leq  h_{\tau} - \frac{ \nabla_{\tau}^{\top} (\mathbf{x}_{\tau} - \mathbf{v}_{\tau})}{2} .
    \end{align*}
        \\ \emph{Case 2.2}: Else, $ \nabla_{\tau}^{\top} (\mathbf{x}_{\tau} - \mathbf{v}_{\tau}) \leq 2 \beta (2R)^2$, and then $\hat{\sigma}_{\tau} = \frac{ \nabla_{\tau}^{\top} (\mathbf{x}_{\tau} - \mathbf{v}_{\tau})}{2 \beta (2R)^2}$, and we have
    \begin{align*}
        h_{\tau+1} & \leq h_{\tau} + \hat{\sigma}_{\tau}^2 \beta (2R)^2 - \hat{\sigma}_{\tau}  \nabla_{\tau}^{\top} (\mathbf{x}_{\tau} - \mathbf{v}_{\tau})  \leq h_{\tau} -  \frac{\left( \nabla_{\tau}^{\top} (\mathbf{x}_{\tau} - \mathbf{v}_{\tau})\right)^2}{ 4 \beta (2R)^2} .
    \end{align*}
    From both cases, we have
    \begin{align*}
        h_{\tau+1}  \leq h_{\tau} - \min \bigg{\{} \frac{\left( \nabla_{\tau}^{\top} (\mathbf{x}_{\tau} - \mathbf{v}_{\tau})\right)^2}{ 4 \beta (2R)^2}, \frac{ \nabla_{\tau}^{\top} (\mathbf{x}_{\tau} - \mathbf{v}_{\tau})}{2} \bigg{\}}  & \leq h_{1} - \tau \min_{i = 1, \dots, \tau} \bigg{\{}  \frac{ \left(\nabla_{i}^{\top} (\mathbf{x}_{i} - \mathbf{v}_{i})\right)^2}{ 4 \beta (2R)^2} , \frac{ \nabla_{i}^{\top} (\mathbf{x}_{i} - \mathbf{v}_{i})}{2} \bigg{\}} \nonumber \\ 
        & \leq  h_{1} - \tau \min \bigg{\{}  \frac{ \epsilon^2}{4 \beta (2R)^2} , \frac{ \epsilon}{2} \bigg{\}}.
    \end{align*}
    Thus, for all cases, after a maximum of $L$ iterations, when
    \begin{align*}
        L = \max \bigg{\{} \frac{4 \beta (2R)^2}{\epsilon^2} (h_1 - \epsilon) , 
        ~ \frac{2}{\epsilon} (h_1 - \epsilon) \bigg{\}},
    \end{align*}
    we obtain $ h_{L+1} \leq \epsilon$.
\end{proof}

\end{document}